\documentclass[letterpaper, 10 pt, conference, onecolumn]{ieeeconf}  

\IEEEoverridecommandlockouts                              

\usepackage{import}
\usepackage{algorithm}
\usepackage{algpseudocode}
\usepackage{amsmath, amssymb}

\usepackage{amsthm}

\usepackage{hyperref}      
\usepackage[capitalise]{cleveref} 
\usepackage{graphicx}
\usepackage{xcolor}
\usepackage{subcaption}
\usepackage{cite}

\theoremstyle{plain}
\newtheorem{theorem}{Theorem}
\newtheorem{proposition}[theorem]{Proposition}
\newtheorem{lemma}[theorem]{Lemma}
\newtheorem{corollary}[theorem]{Corollary}
\theoremstyle{definition}
\newtheorem{definition}[theorem]{Definition}

\theoremstyle{remark}

\title{\LARGE \bf
Contraction-Aligned Analysis of Soft Bellman Residual Minimization
with Weighted $L_p$-Norm for Markov Decision
Problem
}

\author{Hyukjun Yang, Han-Dong Lim and Donghwan Lee
\thanks{H. Yang, H. Lim and D. Lee are with the Department of Electrical Engineering, Korea Advanced Institute of Science and Technology (KAIST), Daejeon, 34141, South Korea
        {\tt\small \{jundol32, limaries30, donghwan\}@kaist.ac.kr}}%
}

\begin{document}

\maketitle
\thispagestyle{empty}
\pagestyle{empty}

\begin{abstract}
The problem of solving Markov decision processes under function approximation remains a fundamental challenge, even under linear function approximation settings. A key difficulty arises from a geometric mismatch: while the Bellman optimality operator is contractive in the $L_\infty$-norm, commonly used objectives such as projected value iteration and Bellman residual minimization rely on $L_2$-based formulations. To enable gradient-based optimization, we consider a soft formulation of Bellman residual minimization and extend it to a generalized weighted $L_p$-norm. We show that this formulation aligns the optimization objective with the contraction geometry of the Bellman operator as $p$ increases, and derive corresponding performance error bounds. Our analysis provides a principled connection between residual minimization and Bellman contraction, leading to improved control of error propagation while remaining compatible with gradient-based optimization.
\end{abstract}

\section{INTRODUCTION}

Solving Markov decision processes (MDPs) and their sample-based counterpart, reinforcement learning (RL), remains a central challenge, particularly when the state or action space is large. In such settings, function approximation is often necessary, but introduces theoretical and algorithmic challenges even in the linear setting~\cite{sutton1998reinforcement}. To address the challenges introduced by function approximation, a broad range of methods has been developed. Among them, two prominent classes are dynamic programming (DP)-based methods~\cite{bertsekas1995neuro, bertsekas2012dynamic_1}, which typically involve a projection step onto a function class when combined with function approximation, and Bellman residual minimization (BRM), which directly minimizes the Bellman residual~\cite{baird1995residual, maillard2010finite}. However, both approaches face fundamental difficulties under function approximation. In DP-based methods, although the Bellman operator is a contraction under the $L_\infty$-norm~\cite{schweitzer1985generalized, sutton1998reinforcement, puterman2014markov, bertsekas2012dynamic_1}, the projection step is typically defined with respect to the $L_2$-norm, and this mismatch may destroy the contraction property even under linear function approximation~\cite{gordon1995stable, de2000existence, scherrer2010should}. In BRM-based methods, the projection step is avoided, but the objective is still typically defined through the squared Bellman residual, which again corresponds to an $L_2$-based geometry~\cite{baird1995residual, scherrer2010should, antos2008learning, maillard2010finite, geist2017bellman, fujimoto2022should, lee2026analysis}. As a result, neither class is fully aligned with the contraction structure of the Bellman operator governing error propagation.

Building on this observation, we consider replacing the conventional $L_2$-based formulation with a more general weighted $L_p$-norm. We show that such a replacement is generally incompatible with DP-based methods, since the projection defined in weighted $L_p$-norm spaces generally break the contraction structure. In contrast, this formulation is naturally suited to BRM-based methods. More importantly, we demonstrate that adopting a weighted $L_p$-norm is not merely a heuristic modification, but a principled way to better align the optimization objective with the contraction geometry of the Bellman operator. This alignment improves error control and remains compatible with gradient-based optimization.

\subsection{Related Works}

Due to the challenges of solving MDPs under function approximation, a large body of work has studied both BRM and DP-based approaches. In this paper, we use the term BRM as a general framework that minimizes the Bellman residual induced by a given Bellman operator. Depending on the choice of operator, BRM can be categorized into (i) evaluation-based BRM, (ii) direct control BRM, and (iii) soft control BRM. Early works introduced the concept of evaluation-based BRM in the context of policy evaluation under linear function approximation, where the residual is defined with respect to the Bellman expectation operator~\cite{baird1995residual}. Subsequent works extended this approach to control settings by incorporating evaluation-based BRM into policy iteration frameworks, where BRM serves as an approximate policy evaluation step, while policy improvement is performed separately via greedy updates~\cite{maillard2010finite, scherrer2010should, antos2008learning, geist2017bellman}. However, in these approaches, the Bellman optimality operator is not directly incorporated into the residual objective. More recent studies have considered direct control BRM formulations that incorporate the Bellman optimality operator into the residual objective~\cite{fujimoto2022should}. Furthermore, recent work has also considered soft control BRM, which replaces the Bellman optimality operator with its entropy-regularized soft variant~\cite{lee2026analysis}. The soft Bellman operator itself has been widely adopted in the literature due to its favorable optimization properties and its close connection to entropy-regularized reinforcement learning~\cite{haarnoja2018soft, haarnoja2017reinforcement, dai2018sbeed}. Our work follows this line by adopting a soft Bellman operator, motivated in particular by the need for gradient-based optimization.

In parallel, DP-based approaches such as projected value iteration (PVI) have been extensively studied. 
While both BRM and PVI can be viewed as approximations of the Bellman equation, it is well known that PVI may lose the contraction property under function approximation, which can lead to instability. Numerous works have attempted to address these issues by analyzing the underlying causes of failure and identifying sufficient conditions for stability~\cite{munos2005error, tsitsiklis1996analysis, de2000existence}, more generally analyzing projected equations and temporal-difference frameworks~\cite{bertsekas2009projected, bertsekas2011temporal}, restricting the function class to preserve contraction properties~\cite{gordon1995stable}, or introducing regularized versions of the projection operator~\cite{lim2024regularized, yang2026periodic}. Motivated by the need to better control error propagation in such non-contractive settings, another line of work has investigated the use of $L_p$-norms in place of the standard $L_2$-norm in DP-based methods~\cite{munos2007performance, farahmand2010error}. Further~\cite{munos2008finite} provided $L_p$-norm regression for PVI and analyzed the error propagation behavior. In contrast, existing BRM approaches predominantly rely on $L_2$-based objectives, and the role of general $L_p$-norm objectives for BRM remains largely unexplored. Our work aims to fill this theoretical gap by studying BRM under general $L_p$-norms and analyzing its alignment with the soft Bellman operator.

\subsection{Contributions}
In this paper, we propose a soft control variant of BRM formulated under a generalized weighted $L_p$-norm. First, we analyze the contraction properties of the soft Bellman operator under the $L_p$-norm. Moreover, we show that although the projection operator can be extended from the $L_2$-norm to the more general $L_p$-norm, the contraction property of the combined Bellman and projection operators no longer holds, and thus the PVI approach is not valid in general. Motivated by this observation, we develop a soft Bellman residual minimization framework under the weighted $L_p$-norm. We further derive performance error bounds and show that, as $p$ increases, the objective progressively aligns with the $L_\infty$-norm Bellman error in the limiting case. Building on this theoretical analysis, we propose a gradient-based optimization algorithm that can stably optimize the resulting objective even for large values of $p$. Finally, we provide empirical evidence that supports our theoretical findings.


\section{PRELIMINARIES}
\subsection{Notation}
Throughout this paper, we adopt standard mathematical notation. We denote the real numbers and the $n$-dimensional Euclidean space by ${\mathbb R}$ and ${\mathbb R}^n$, respectively. The space of all $n \times m$ real matrices is denoted by ${\mathbb R}^{n \times m}$. For a matrix $A$, its transpose is written as $A^\top$, and $I_n$ refers to the identity matrix of dimension $n$. For a finite set $\cal S$, $|{\cal S}|$ denotes its cardinality. $e_i \in {\mathbb R}^n$ denotes the $i$-th standard basis vector. For theoretical analysis, we consider a linear function approximation where $\Phi \in \mathbb{R}^{|\mathcal{S}||\mathcal{A}| \times d}$ denotes a full-rank feature matrix and $\theta \in \mathbb{R}^d$ is the corresponding parameter vector, where $d$ is the number of feature vectors with $d\ll |\mathcal{S}||\mathcal{A}|$. Under this formulation, the parameterized Q-function is represented as $Q_\theta = \Phi\theta$.

\subsection{Markov decision process}
We consider the infinite-horizon discounted MDP, where the agent sequentially takes actions to maximize cumulative discounted rewards. In an MDP with the state-space ${\cal S}:=\{ 1,2,\ldots ,|{\cal S}|\}$ and action-space ${\cal A}:= \{1,2,\ldots,|{\cal A}|\}$, the agent selects an action $a \in {\cal A}$ at the current state $s\in {\cal S}$, then the state
transits to the next state $s'\in {\cal S}$ with transition probability $P(s'|s,a)$, and the transition incurs a
reward $r(s,a,s') \in {\mathbb R}$. We define the expected one-step reward as $R(s,a) := \sum_{s' \in S} P(s'|s,a)\, r(s,a,s')$. A deterministic policy, $\pi :{\cal S} \to {\cal A}$, maps a state $s \in {\cal S}$ to an action $\pi(s)\in {\cal A}$. The objective of an MDP is to find an optimal policy $\pi^*$ that maximizes the expected cumulative discounted reward over an infinite horizon, i.e., $\pi^*:= \arg \max_{\pi\in \Theta} {\mathbb E}\!\left[\sum_{k=0}^\infty \gamma^k R(s_k,a_k)\,\middle|\,\pi\right]$, where $\gamma \in [0,1)$ is the discount factor, $\Theta$ is the set of all deterministic policies, and $(s_0,a_0,s_1,a_1,\ldots)$ is a trajectory generated under $\pi$. Moreover, the Q-function under policy $\pi$ is defined as $Q^{\pi}(s,a)={\mathbb E}\!\left[\sum_{k=0}^\infty \gamma^k R(s_k,a_k)\,\middle|\,s_0=s,a_0=a,\pi\right]$, $(s,a)\in {\cal S} \times {\cal A}$, and the optimal Q-function is $Q^*(s,a)=Q^{\pi^*}(s,a)$. Once $Q^*$ is known, an optimal policy is obtained via $\pi^*(s)=\arg \max_{a\in {\cal A}}Q^*(s,a)$. Throughout, we assume the MDP is ergodic so that the stationary distribution exists.

\subsection{Definitions and lemmas}
In this subsection, we introduce the essential mathematical definitions and lemmas required for our analysis. To address the non-differentiability of the standard Bellman operator in gradient-based optimization, we employ the \textit{soft control Bellman operator}~\cite{haarnoja2017reinforcement, haarnoja2018soft}, denoted by $F_\lambda$. It is defined as
\[
\begin{aligned}
(F_\lambda Q)(s,a)
&:= R(s,a)  + \gamma \sum_{s' \in \mathcal S} P(s'|s,a)\lambda
\ln \Bigg(
\sum_{u \in \mathcal A}
\exp\!\Big(\tfrac{Q(s',u)}{\lambda}\Big)
\Bigg)
\end{aligned}
\]
where $\lambda >0$ is the temperature parameter. It is well known that the soft Bellman operator remains a $\gamma$-contraction in the $L_\infty$-norm, and therefore admits a unique fixed point~\cite{geist2019theory}.

Furthermore, we utilize the \textit{weighted $L_p$-norm}. 
For a vector $x \in \mathbb{R}^n$ and weights $w_i > 0$ where $\sum^{n}_{i=1}w_i=1$, this norm is defined as
\[
\|x\|_{p,w} = \left( \sum_{i=1}^n w_i |x_i|^p \right)^\frac{1}{p}.
\]
For notational convenience, we refer to this norm as the $L_{p,w}$-norm throughout the remainder of the paper. 
To analyze operators under this norm, it is useful to relate $\|\cdot\|_{p,w}$ to more familiar norms such as the standard $L_p$-norm and the $L_\infty$-norm. 
The following lemma, directly adapted from \cite{lee2026unified}, summarizes several well-known inequalities that connect these norms and will be used in our contraction analysis.

\begin{lemma}[{\cite[Lemma 5]{lee2026unified}}]\label{lemma:appendix_lemma}
Let $x \in \mathbb{R}^n$ and $p \ge 1$. Define $w_{\min} := \min_i w_i$ and $w_{\max} := \max_i w_i$. Then the following inequalities hold:
\begin{enumerate}\setlength{\itemsep}{4pt}
    \item $w_{\min}^{1/p}\|x\|_\infty \le \|x\|_{p,w} \le n^{1/p}w_{\max}^{1/p}\|x\|_\infty.$

    \item $w_{\min}^{1/p}\|x\|_p \le \|x\|_{p,w} \le w_{\max}^{1/p}\|x\|_p.$
 
    \item $\|x\|_\infty \le \|x\|_p \le n^{1/p}\|x\|_\infty.$
    
\end{enumerate}
\end{lemma}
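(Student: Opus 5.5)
All three inequalities follow from elementary termwise comparisons. The plan is to prove (3) first, then (2), and to obtain (1) by combining (2) and (3), apart from one lower bound that is done directly. Throughout, fix $x\in\mathbb{R}^n$ and $p\ge 1$, and let $i^\star$ be an index with $|x_{i^\star}|=\|x\|_\infty$.

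For (3), the lower bound uses that $\|x\|_p^p=\sum_i |x_i|^p \ge |x_{i^\star}|^p=\|x\|_\infty^p$, since every summand is nonnegative. For the upper bound, each $|x_i|^p\le \|x\|_\infty^p$, so $\sum_i|x_i|^p\le n\|x\|_\infty^p$. Taking $p$-th roots, which is monotone on $[0,\infty)$, gives $\|x\|_\infty\le\|x\|_p\le n^{1/p}\|x\|_\infty$.

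For (2), bound the weights termwise: $w_{\min}|x_i|^p\le w_i|x_i|^p\le w_{\max}|x_i|^p$ for every $i$. Summing over $i$ gives
\[
w_{\min}\|x\|_p^p\le \|x\|_{p,w}^p\le w_{\max}\|x\|_p^p ,
\]
and taking $p$-th roots yields (2).

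For (1), the upper bound follows by chaining (2) and (3): $\|x\|_{p,w}\le w_{\max}^{1/p}\|x\|_p\le n^{1/p}w_{\max}^{1/p}\|x\|_\infty$.

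The lower bound in (1) cannot be obtained by chaining in the same way, because composing (2) and (3) gives an inequality pointing in the wrong direction for this purpose. Instead, drop all terms except the maximal one: $\|x\|_{p,w}^p\ge w_{i^\star}|x_{i^\star}|^p\ge w_{\min}\|x\|_\infty^p$. Alternatively, combine the lower bound of (2) with the lower bound $\|x\|_p\ge\|x\|_\infty$ from (3), which gives $\|x\|_{p,w}\ge w_{\min}^{1/p}\|x\|_p\ge w_{\min}^{1/p}\|x\|_\infty$; this route also works since both inequalities point the same way.

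There is no real obstacle; the only care needed is the monotonicity of $t\mapsto t^{1/p}$ when passing from the $p$-th power inequalities to the norm inequalities, together with the positivity of the weights.
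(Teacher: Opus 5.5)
Your proof is correct. It also supplies something the paper does not: the paper gives no argument of its own and simply defers to \cite[Lemma 5]{lee2026unified}. You instead verify all three chains directly. You compare each term $w_i|x_i|^p$ with $w_{\min}|x_i|^p$, $w_{\max}|x_i|^p$ and $\|x\|_\infty^p$, and then use the monotonicity of $t\mapsto t^{1/p}$.

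The citation buys brevity. Your route buys self-containment and shows exactly which hypotheses are used. Only $w_i\ge 0$ is needed; neither the normalization $\sum_i w_i=1$ nor $p\ge 1$ plays any role. In fact, inserting $\sum_i w_i=1$ into your termwise bound gives the sharper inequality $\|x\|_{p,w}\le\|x\|_\infty$. The stated upper bound $n^{1/p}w_{\max}^{1/p}\|x\|_\infty$ is a weakening of this, since $n w_{\max}\ge\sum_i w_i=1$.

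One small inconsistency in your write-up: you say the lower bound in (1) ``cannot be obtained by chaining,'' but the alternative you give right afterward, $\|x\|_{p,w}\ge w_{\min}^{1/p}\|x\|_p\ge w_{\min}^{1/p}\|x\|_\infty$, is exactly such a chain and is valid. That sentence should simply be dropped.
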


\begin{proof}
The proof follows directly from~\cite[Lemma 5]{lee2026unified}.




\end{proof}

\section{Contraction analysis for soft Bellman operator in $L_{p,w}$-norm}
\label{sec:contraction}

In this section, we investigate the contraction property of the soft Bellman operator $F_\lambda$ under the $L_{p,w}$-norm. While the Bellman operator is contractive in the $L_\infty$-norm, this norm is non-differentiable and therefore unsuitable for gradient-based optimization. This motivates the use of the differentiable $L_{p,w}$-norm. We show that, for sufficiently large $p$, $F_\lambda$ remains a contraction in the $L_{p,w}$-norm, which ensures the existence and uniqueness of the fixed point.

\begin{definition}[Effective Contraction Rate]
\label{def:effective_gamma}
For $p \in (1,\infty)$, we define the effective contraction rate of 
$F_\lambda$ in the $L_{p,w}$-norm by
\[
\gamma_{p,w}
:=
\gamma \, n^{\frac{1}{p}}
\left(\frac{w_{\max}}{w_{\min}}\right)^{\frac{1}{p}}.
\]
\end{definition}

\begin{proposition}
\label{prop:lipschitz}
Let $w_i>0$ for all $i \in \{1, 2,\dots,n\}$.
For any $p \in (1,\infty)$,
\[
\|F_\lambda Q - F_\lambda Q'\|_{p,w}
\le
\gamma_{p,w}
\|Q - Q'\|_{p,w}.
\]
\end{proposition}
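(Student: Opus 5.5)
The plan is to pass through the $L_\infty$-norm, where $F_\lambda$ is known to be a $\gamma$-contraction, and use Lemma~\ref{lemma:appendix_lemma}(1) to move between $\|\cdot\|_{p,w}$ and $\|\cdot\|_\infty$. Throughout, $n=|\mathcal S||\mathcal A|$ and $x = F_\lambda Q - F_\lambda Q'$.

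\emph{Step 1 (upper bound).} Apply the right-hand inequality of Lemma~\ref{lemma:appendix_lemma}(1) to $x$:
\[
\|F_\lambda Q - F_\lambda Q'\|_{p,w} \le n^{1/p} w_{\max}^{1/p}\,\|F_\lambda Q - F_\lambda Q'\|_\infty .
\]

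\emph{Step 2 ($L_\infty$ contraction).} Use the $\gamma$-contraction of $F_\lambda$ in $\|\cdot\|_\infty$ (cited from \cite{geist2019theory}). For completeness it can be checked directly. The log-sum-exp map $g(v)=\lambda\ln\sum_u \exp(v_u/\lambda)$ has gradient equal to the softmax vector, which is nonnegative with entries summing to one. By the mean value theorem, $|g(v)-g(v')|\le \max_u|v_u-v'_u|$. Averaging over $P(\cdot|s,a)$ and multiplying by $\gamma$ then gives
\[
\|F_\lambda Q - F_\lambda Q'\|_\infty \le \gamma\|Q-Q'\|_\infty .
\]

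\emph{Step 3 (lower bound).} Apply the left-hand inequality of Lemma~\ref{lemma:appendix_lemma}(1) to $Q-Q'$:
\[
\|Q-Q'\|_\infty \le w_{\min}^{-1/p}\|Q-Q'\|_{p,w}.
\]
Chaining the three steps gives
\[
\|F_\lambda Q - F_\lambda Q'\|_{p,w}\le \gamma\, n^{1/p}\left(\tfrac{w_{\max}}{w_{\min}}\right)^{1/p}\|Q-Q'\|_{p,w} = \gamma_{p,w}\|Q-Q'\|_{p,w},
\]
which is exactly the claim with the rate of Definition~\ref{def:effective_gamma}.

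There is no real obstacle. The only point needing care is Step 2, namely the nonexpansiveness of log-sum-exp in the sup norm. That follows from the softmax-gradient argument above, or alternatively from the monotonicity of $g$ together with $g(v+c\mathbf 1)=g(v)+c$. Note also that the statement does not claim $\gamma_{p,w}<1$. Contraction requires $p$ large enough that $\gamma\,(n\,w_{\max}/w_{\min})^{1/p}<1$, which is possible since this factor tends to $\gamma$ as $p\to\infty$.
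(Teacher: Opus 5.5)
Your proposal is correct and follows the paper's proof: you chain the upper bound of Lemma~\ref{lemma:appendix_lemma}(1) on $F_\lambda Q - F_\lambda Q'$, the $L_\infty$ $\gamma$-contraction of $F_\lambda$, and the lower bound $\|Q-Q'\|_\infty \le w_{\min}^{-1/p}\|Q-Q'\|_{p,w}$. Your direct check of the $L_\infty$ contraction, via the softmax gradient of log-sum-exp, is a valid addition where the paper only cites \cite{geist2019theory}.
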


\begin{proof}
The proof relies on the contraction property of the soft Bellman operator in the $L_\infty$-norm together with the norm equivalence results established in Lemma~\ref{lemma:appendix_lemma}. First recall that the soft Bellman operator satisfies the standard contraction property~\cite{geist2019theory},
\[
\|F_\lambda Q - F_\lambda Q'\|_\infty
\le
\gamma
\|Q-Q'\|_\infty .
\]

\noindent From Lemma~\ref{lemma:appendix_lemma}, we have the norm relations
\[
\|x\|_{p,w}
\le
n^{1/p} w_{\max}^{1/p} \|x\|_\infty ,
\qquad
\|x\|_\infty
\le
w_{\min}^{-1/p}\|x\|_{p,w}.
\]

\noindent Applying the first inequality with $x = F_\lambda Q - F_\lambda Q'$ gives
\[
\|F_\lambda Q - F_\lambda Q'\|_{p,w}
\le
n^{1/p} w_{\max}^{1/p}
\|F_\lambda Q - F_\lambda Q'\|_\infty .
\]

\noindent Using the $L_\infty$ contraction property yields
\[
\|F_\lambda Q - F_\lambda Q'\|_{p,w}
\le
\gamma n^{1/p} w_{\max}^{1/p}
\|Q-Q'\|_\infty .
\]

\noindent Applying the second inequality to $x = Q-Q'$ gives
\[
\|Q-Q'\|_\infty
\le
w_{\min}^{-1/p}
\|Q-Q'\|_{p,w}.
\]

\noindent Substituting this bound into the previous inequality yields
\[
\|F_\lambda Q - F_\lambda Q'\|_{p,w}
\le
\gamma n^\frac{1}{p}
\left(\frac{w_{\max}}{w_{\min}}\right)^\frac{1}{p}
\|Q-Q'\|_{p,w}.
\]

\noindent By Definition~\ref{def:effective_gamma}, this is exactly
\[
\|F_\lambda Q - F_\lambda Q'\|_{p,w}
\le
\gamma_{p,w}
\|Q-Q'\|_{p,w},
\]
which proves the result.
\end{proof}

\begin{corollary}
\label{cor:contraction}
There exists a finite $\bar p \in (1,\infty)$ such that for all $p\ge \bar p$,
\[
\gamma_{p,w} < 1.
\]
Consequently, for sufficiently large $p$, $F_\lambda$ is a contraction in the $L_{p,w}$-norm space
and admits a unique fixed point $Q^*_\lambda$.
\end{corollary}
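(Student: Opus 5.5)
The plan has two parts: first show that $\gamma_{p,w}<1$ for all large $p$, then invoke the Banach fixed-point theorem in the finite-dimensional normed space $(\mathbb{R}^n,\|\cdot\|_{p,w})$.

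\textbf{Step 1: the contraction rate drops below one.} Set $c := n\,w_{\max}/w_{\min}$. Then $c\ge n\ge 1$ is a fixed finite constant, and by Definition~\ref{def:effective_gamma} we have $\gamma_{p,w} = \gamma\, c^{1/p}$. The map $p\mapsto c^{1/p}$ is nonincreasing on $(1,\infty)$ and tends to $1$ as $p\to\infty$. Because $\gamma<1$, the condition $\gamma c^{1/p}<1$ is equivalent to $\frac{1}{p}\ln c < \ln(1/\gamma)$, i.e.\ to $p > \ln c/\ln(1/\gamma)$. I would therefore choose explicitly
\[
\bar p := \max\Bigl\{2,\; 1+\frac{\ln c}{\ln(1/\gamma)}\Bigr\},
\]
which is finite and lies in $(1,\infty)$. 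For every $p\ge\bar p$ we get $\gamma_{p,w}<1$. The case $\gamma=0$ is trivial, since then $\gamma_{p,w}=0$.

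\textbf{Step 2: contraction and unique fixed point.} For $p\ge \bar p$, Proposition~\ref{prop:lipschitz} gives
\[
\|F_\lambda Q-F_\lambda Q'\|_{p,w}\le \gamma_{p,w}\|Q-Q'\|_{p,w}
\]
with $\gamma_{p,w}<1$, so $F_\lambda$ is a contraction in the $L_{p,w}$-norm. The space $\mathbb{R}^n$ with this norm is complete, since it is finite-dimensional; equivalently, Lemma~\ref{lemma:appendix_lemma} shows the norm is equivalent to $\|\cdot\|_\infty$. Banach's fixed-point theorem then yields a unique fixed point.

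\textbf{Identifying the fixed point.} To justify the name $Q^*_\lambda$, note that the fixed point is the same object as the $L_\infty$ fixed point of $F_\lambda$ cited earlier. Being a fixed point depends only on the equation $F_\lambda Q=Q$, not on the norm. Both the $L_\infty$ and $L_{p,w}$ arguments give uniqueness, so the two fixed points coincide.

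\textbf{Main obstacle.} There is no genuine obstacle here. The only point needing care is that $n$ and the ratio $w_{\max}/w_{\min}$ are fixed and do not depend on $p$, so the threshold $\bar p$ is a finite number.
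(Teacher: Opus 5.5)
Your proposal is correct and follows the paper's argument: reduce $\gamma_{p,w}<1$ to $p > \ln(n\,w_{\max}/w_{\min})/\ln(1/\gamma)$, then apply the Banach fixed-point theorem using the Lipschitz bound of Proposition~\ref{prop:lipschitz}. The paper leaves three small points implicit that you make explicit: a concrete threshold $\bar p$, the degenerate case $\gamma=0$, and the observation that this fixed point coincides with the $L_\infty$ one.
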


\begin{proof}
From Proposition~\ref{prop:lipschitz},
\[
\|F_\lambda Q - F_\lambda Q'\|_{p,w}
\le \gamma_{p,w}\|Q-Q'\|_{p,w}, 
\]
\[
\gamma_{p,w} = \gamma n^\frac{1}{p}
\left(\frac{w_{\max}}{w_{\min}}\right)^\frac{1}{p}.
\]
Thus, $F_\lambda$ is a contraction whenever $\gamma_{p,w} < 1$, which is equivalent to
\[
p >
\frac{\ln\!\left(n\frac{w_{\max}}{w_{\min}}\right)}
{\ln(1/\gamma)}.
\]
Hence, there exists a finite $\bar p$ such that $\gamma_{p,w} < 1$ for all $p \ge \bar p$. By the Banach fixed-point theorem~\cite{banach1922operations, bertsekas2012dynamic_1}, it admits a unique fixed point $Q_\lambda^*$.
\end{proof}

Proposition~\ref{prop:lipschitz} and Corollary~\ref{cor:contraction}
demonstrate that the contraction property of the Bellman operator is not confined to the classical $L_\infty$ geometry. 
Since $\gamma_{p,w} \to \gamma$ as $p\to\infty$ and $\gamma<1$, 
the operator becomes contractive in the $L_{p,w}$-norm for sufficiently large $p$.
This establishes that the Bellman fixed-point equation remains well-posed beyond the non-differentiable $L_\infty$ setting, thereby providing a rigorous foundation for the residual minimization framework developed in the next section.

\section{Soft Bellman residual minimization in $L_{p,w}$}
\label{sec:sbrm}

In this section, we consider minimizing the Bellman residual directly in the same geometry in order to approximate the induced fixed-point.

\subsection{Limitation of Projected Value Iteration in $L_{p,w}$-Norm}
\label{subsec:avi_limitation}

To analyze the limitations of PVI, which relies on metric projection, we first define the projection operator. For any $Q \in \mathbb{R}^n$, the $L_{p,w}$-norm metric projection onto the function class
$\mathcal{Q}_\Phi := \{ Q_\theta = \Phi\theta : \theta \in \mathbb{R}^d \}$
is defined as
\begin{equation}
\Gamma_{p,w}(Q)
:=
\arg\min_{Q' \in \mathcal{Q}_\Phi}
\| Q' - Q \|_{p,w}.
\end{equation}

A classical result in approximation theory states that, 
in normed spaces of dimension at least three, 
the property that nearest-point mappings (which are equivalent to metric projections in our setting) 
onto closed convex sets are distance-shrinking characterizes inner product spaces 
\cite{phelps1957convex}. In particular, the following implication holds.

\begin{proposition}[{\cite[Theorem 5.2]{phelps1957convex}}]
\label{prop:nonexpansive}
Let $E$ be a normed space with $\dim E \ge 3$. 
If for every nonempty closed convex set $C \subset E$ 
the metric projection $P_C$ is nonexpansive (whenever it exists), 
then $E$ is a Hilbert space.
\end{proposition}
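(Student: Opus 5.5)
The plan is to reduce the statement to classical characterizations of inner product spaces. The route goes from nonexpansive projections onto lines (one-dimensional subspaces) to orthogonality, and from there to the parallelogram law.

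\textbf{Step 1: from norm-reducing projections to strict convexity.} Suppose every metric projection onto a closed convex set is nonexpansive wherever it is defined. First, nonexpansiveness forces the projection to be single-valued. If $y,y' \in P_C(x)$, compare $x$ with itself: the distance between the selected points must be at most $\|x-x\|=0$. So every closed convex set admitted by the hypothesis is Chebyshev. Taking $C$ to be a line segment then shows the norm is strictly convex. Indeed, if the unit sphere contained a segment $[u,v]$, the projection of $0$ onto the line through $u,v$ (suitably scaled) would be the whole segment, a contradiction.

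\textbf{Step 2: projections onto lines give Birkhoff--James orthogonality.} Take $C=\mathrm{span}\{y\}$. Then $P_C(x)=0$ exactly when $x\perp_B y$, where $\perp_B$ denotes Birkhoff orthogonality: $\|x+ty\|\ge\|x\|$ for all $t$. For $x\perp_B y$, apply nonexpansiveness to the pair $x$ and $x+sy$. Since $P_C(x+sy)=sy$, we get $\|sy\|\le \|sy\|$, which is trivial. To get more, apply it to $x$ and $-x$: this gives $0\le 2\|x\|$, again trivial.

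The real information comes from projecting onto lines through points other than the origin, or onto half-planes. For $z$ with $P_C(z)=\alpha y$ and $x\perp_B y$, nonexpansiveness yields $\|\alpha y\| \le \|z-x\|$. Choosing $z = x + \alpha y$ and $z' = -x+\alpha y$ (both project to $\alpha y$ if $\perp_B$ is symmetric), one would like to derive symmetry of Birkhoff orthogonality. The cleanest route is to use the $\dim\ge 3$ hypothesis with $C$ a hyperplane. Nonexpansiveness of projections onto every hyperplane implies that each hyperplane $H$ admits a linear norm-one projection onto it. Equivalently, for every direction, the set of vectors orthogonal to it is a linear subspace. This is exactly Kakutani's condition.

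\textbf{Step 3: conclude via Kakutani / Blaschke.} For $\dim E\ge 3$, Kakutani's theorem states that if every two-dimensional subspace (equivalently, every hyperplane) is the range of a norm-one linear projection, then $E$ is an inner product space. The key intermediate claim is that nonexpansiveness makes $P_H$ linear. For $x$ arbitrary and $h\in H$, we have $P_H(x+h)=P_H(x)+h$ by translation invariance of the distance. Homogeneity $P_H(tx)=tP_H(x)$ follows from scaling. Additivity requires showing that $\ker P_H = P_H^{-1}(0)$ is a linear subspace. It is a cone and symmetric. It is also a closed set that meets each translate $x+H$ in exactly one point (by Step 1), so it is the graph of a map over a one-dimensional quotient. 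Together with the cone property, this makes it a line. Hence $P_H$ is a linear projection of norm one, since nonexpansiveness gives $\|P_Hx\|\le\|x\|$ using $P_H 0=0$. Applying Kakutani finishes the proof, and completeness of $E$ (if needed) follows by passing to the completion or by the paper's finite-dimensional setting.

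\textbf{Main obstacle.} The delicate point is the additivity step in Step 3, namely showing that the fibre $P_H^{-1}(0)$ is a line and not a curved cone. If the "graph over the quotient" argument is not convincing, I would first invoke the uniqueness and continuity of $P_H$ together with strict convexity from Step 1. In a strictly convex two-dimensional section, the set of points whose nearest point in a line is $0$ is precisely the set of $x$ with $x\perp_B$ that line. This set is a union of two opposite rays, hence a line, and that gives linearity.
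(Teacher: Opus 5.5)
The paper does not prove this proposition; it imports Phelps's theorem. Your argument therefore has to stand on its own, and in the paper's finite-dimensional setting it does.

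The step you flag as the main obstacle is already settled by your own graph argument. Write $K=P_H^{-1}(0)=\{tz+g(t):t\in\mathbb{R}\}$ with $g:\mathbb{R}\to H$. The invariance $sK\subseteq K$, plus uniqueness of the point $K\cap(stz+H)$, gives $g(st)=s\,g(t)$. Hence $g(t)=t\,g(1)$ and $K$ is a line. This uses only that $H$ is Chebyshev; a Chebyshev hyperplane always has a linear metric projection. Nonexpansiveness is needed only for single-valuedness and for the pair $(x,0)$, which gives $\|P_Hx\|\le\|x\|$.

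The standard route to this result is shorter, and you circled it in Step~2 without closing it. Suppose $P_{\mathbb{R}y}(x)=0$, i.e.\ $x\perp_B y$. Compare $x$ with the point $ty$, which is its own projection. Nonexpansiveness gives $\|ty\|\le\|x-ty\|$ for all $t$, i.e.\ $y\perp_B x$. So Birkhoff orthogonality is symmetric, and James's theorem (in dimension $\ge 3$ this forces an inner product) finishes.

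That route needs only lines and no linearity lemma. Yours needs hyperplanes and the lemma, but it shows explicitly that every hyperplane is $1$-complemented and ends on Kakutani's theorem. Both end on a deep classical characterization, so neither is more elementary in the end.

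Some things to repair:
\begin{itemize}
\item Your completeness remark is wrong. Nothing in the hypothesis yields completeness: any incomplete inner product space satisfies it, and passing to the completion does not make $E$ complete. The honest conclusion is ``inner product space'', which agrees with ``Hilbert'' in the paper's setting $E=\mathbb{R}^n$.
\item ``Every $2$-dimensional subspace, equivalently every hyperplane'' is literally the same condition only when $\dim E=3$. Also, in infinite dimensions a hyperplane need not be proximinal, and then the hypothesis says nothing about it.
\item Both issues in the previous item disappear if you first restrict to an arbitrary $3$-dimensional subspace $F$. Closed convex subsets of $F$ are proximinal in $E$, and their nearest points for $x\in F$ are computed inside $F$, so $F$ inherits the hypothesis. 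Your argument then makes $F$ Euclidean. Since any two vectors lie in such an $F$, the parallelogram law holds on all of $E$.
\item Step~1 depends on reading ``nonexpansive'' as constraining every choice of nearest point. If the hypothesis is meant only for sets on which $P_C$ is single-valued, it does not give strict convexity. This is moot for the strictly convex $L_{p,w}$ spaces the paper uses.
\item The ``equivalently'' claims at the end of Step~2 are not justified, but you never use them.
\end{itemize}
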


\noindent From Proposition~\ref{prop:nonexpansive}, and by contraposition, we obtain the following corollary.

\begin{corollary}
\label{cor:projection_failure}
Let $E$ be a Banach space. 
If $E$ is not a Hilbert space, then there exists a nonempty closed convex set 
$C \subset E$ such that the metric projection $P_C$ is not nonexpansive.
\end{corollary}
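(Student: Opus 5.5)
The plan is to obtain Corollary~\ref{cor:projection_failure} as the logical contrapositive of Proposition~\ref{prop:nonexpansive}. Proposition~\ref{prop:nonexpansive} has the form ``if $\dim E\ge 3$ and (every $P_C$ is nonexpansive), then $E$ is Hilbert.'' So if $E$ is not Hilbert and $\dim E\ge 3$, the property ``for every nonempty closed convex $C$, $P_C$ is nonexpansive whenever it exists'' must fail.

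I will first state that negation explicitly. It gives a nonempty closed convex $C\subset E$ and points $x,y\in E$ at which $P_C$ is defined, with
\[
\|P_C x - P_C y\| > \|x-y\| .
\]
This is exactly the claim that $P_C$ is not nonexpansive. The clause ``whenever it exists'' causes no difficulty: the failure is witnessed at points where the projection exists. In the finite-dimensional setting the paper uses, $C$ closed and convex already guarantees existence, though possibly not uniqueness. When $P_C$ is set-valued, nonexpansiveness is read for any selection, and the negation picks some selection that violates it. The Banach hypothesis is not really needed, since Proposition~\ref{prop:nonexpansive} only assumes a normed space, but it is harmless.

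The main obstacle is the dimension hypothesis. Proposition~\ref{prop:nonexpansive} needs $\dim E\ge 3$, but the corollary as stated does not assume it. A non-Hilbert normed space automatically has $\dim E\ge 2$, because every norm on a one-dimensional space is a multiple of $|\cdot|$ and hence comes from an inner product. The remaining case is $\dim E=2$.

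I would handle that case by adding the standing assumption $\dim E\ge 3$, either to the statement or as a remark. This covers every infinite-dimensional space. It also covers the space actually used in the paper, namely $(\mathbb{R}^n,\|\cdot\|_{p,w})$ with $n=|\mathcal S||\mathcal A|\ge 3$ and $p\neq 2$, which is not a Hilbert space because the parallelogram law fails on $e_1,e_2$.

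If one insists on treating the planar case directly, I would try to exhibit a line $C$ and two points whose projections violate nonexpansiveness, using the asymmetry of Birkhoff orthogonality. However, so-called Radon planes have symmetric orthogonality without being Euclidean, which makes a fully general two-dimensional argument delicate. For that reason I would state the corollary under $\dim E\ge 3$, which is all the subsequent PVI analysis needs.
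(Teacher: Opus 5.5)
Your argument is the paper's: the corollary is obtained by contraposition of Proposition~\ref{prop:nonexpansive}, and your handling of the ``whenever it exists'' clause is fine. Your dimension caveat is correct and exposes a defect in the paper's one-line proof, which silently drops the hypothesis $\dim E\ge 3$. The two-dimensional case is not merely delicate; the corollary is false there.

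Take a smooth, strictly convex, non-Euclidean Radon plane, for example the unit circle made of $\ell_p$-arcs in the first and third quadrants and $\ell_q$-arcs in the second and fourth, with $1/p+1/q=1$ and $p\neq 2$. The argument runs in three steps.
\begin{enumerate}
\item The metric projection onto a line with direction $t$ is the linear projection along the unique direction $\nu$ with $\nu\perp_B t$. It has norm one exactly because symmetry of Birkhoff orthogonality gives $t\perp_B\nu$.
\item For a convex polygon $C$, the plane splits into finitely many closed convex regions. On each region $P_C$ is the identity, a constant (vertex cones), or such a norm-one affine projection (edge half-strips). Splitting $[x,y]$ at the region boundaries then gives $\|P_Cx-P_Cy\|\le\|x-y\|$.
\item Approximating by polygons in the Hausdorff metric extends this to every nonempty closed convex $C$, since projections are unique and hence converge.
\end{enumerate}
So this space is a two-dimensional Banach space that is not Hilbert, yet every $P_C$ is nonexpansive.

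No proof can therefore cover $\dim E=2$. Restating the corollary under $\dim E\ge 3$ is the right fix, as you propose. That hypothesis still covers the only case the paper uses, namely $(\mathbb{R}^n,\|\cdot\|_{p,w})$ with $p\neq 2$ and $n\ge 3$.
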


\noindent Therefore, even if the minimizer defining $\Gamma_{p,w}$ exists and is uniquely attained, 
$\Gamma_{p,w}$ is not necessarily nonexpansive when $p \neq 2$. In other words, the contraction property of metric projections is guaranteed only in inner product spaces (i.e., Hilbert spaces). When $p \neq 2$, the space is no longer Hilbert, and thus the nonexpansiveness of the projection is not guaranteed in general.

Recall that PVI iteratively applies a Bellman backup followed by a projection onto the function class~\cite{gordon1995stable, de2000existence}.
The resulting iteration takes the form
\begin{equation}
\label{eq:pavi_update}
Q_{k+1} = \Gamma_{p,w}\big(F_\lambda Q_k\big).
\end{equation}

\noindent Although Section~\ref{sec:contraction} establishes that $F_\lambda$ is contractive in the $L_{p,w}$-norm for sufficiently large $p$, since $\Gamma_{p,w}$ may fail to be nonexpansive (Corollary~\ref{cor:projection_failure}), the composite operator $\mathcal{T}_{\mathrm{PVI}} := \Gamma_{p,w} \circ F_\lambda$ is not guaranteed to be contractive in $\|\cdot\|_{p,w}$.
Consequently, the contraction-based fixed-point guarantee
for $F_\lambda$ does not directly extend to the projected iteration
(\ref{eq:pavi_update}). An empirical illustration of the loss of contraction is presented in Section~\ref{exp:algo}.
This structural limitation motivates the projection-free
residual minimization approach developed next.

\subsection{Framework formulation of $L_{p,w}$-norm soft Bellman residual minimization}

To circumvent the instability induced by projection, we adopt a projection-free strategy based on soft Bellman residual minimization. 
This approach directly targets the fixed-point equation while remaining within the contraction-preserving geometry of $L_{p,w}$. 
We now formally introduce the \emph{$L_{p,w}$-norm Soft Bellman Residual Minimization} (PSBRM) framework.

\begin{definition}
\label{def:objective}
For $Q_\theta = \Phi\theta$, the soft Bellman residual objective is defined as
\begin{equation}
    f_p(\theta) 
    :=
    \frac{1}{p} 
    \|F_\lambda Q_\theta - Q_\theta\|_{p,w}^p
    =
    \frac{1}{p} 
    \|F_\lambda(\Phi \theta) - \Phi \theta\|_{p,w}^p.
\end{equation}
\end{definition}

\begin{definition}
\label{def:psbrm_problem}
The PSBRM solution is defined as
\begin{equation}
\label{eq:brm_problem}
\theta_p^*
\;:=\;
\arg\min_{\theta \in \mathbb{R}^d}
f_p(\theta).
\end{equation}
\end{definition}

The minimizer $\theta_p^*$ defines the PSBRM estimator,
and the corresponding approximate soft $Q$-function is
$
Q_{\theta_p^*} = \Phi \theta_p^* .
$
The PSBRM framework directly seeks a parameter vector whose induced $Q$-function
approximately satisfies the soft Bellman fixed-point equation,
without introducing an additional projection step.

\subsection{Error analysis and quasi-optimality}
\label{sec:erroranalysis}

We now analyze the approximation properties of the proposed objective. We begin with a fundamental inequality that connects the Bellman residual
to the distance from the fixed point.

\begin{lemma}
\label{lemma:sandwich}
Let $p \ge \bar p$ such that $\gamma_{p,w} < 1 $. For any $\theta \in \mathbb{R}^d$,
\[
\begin{aligned}
\frac{(1-\gamma_{p,w})^p}{p}
\|Q_\theta - Q_\lambda^*\|_{p,w}^p \le
f_p(\theta) \le
\frac{(1+\gamma_{p,w})^p}{p}
\|Q_\theta - Q_\lambda^*\|_{p,w}^p.
\end{aligned}
\]
\end{lemma}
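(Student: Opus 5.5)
The plan is the standard sandwich argument that relates a residual to the distance from a fixed point, carried out in the $L_{p,w}$-norm. The only inputs are the fixed-point identity $Q_\lambda^* = F_\lambda Q_\lambda^*$ from Corollary~\ref{cor:contraction} and the Lipschitz bound of Proposition~\ref{prop:lipschitz}. Since $p\ge\bar p$, we have $\gamma_{p,w}<1$, so $Q_\lambda^*$ exists and is unique. The target inequality is a statement about the $p$-th power of a norm divided by $p$. It therefore suffices to prove the unpowered two-sided bound
\[
(1-\gamma_{p,w})\|Q_\theta - Q_\lambda^*\|_{p,w} \le \|F_\lambda Q_\theta - Q_\theta\|_{p,w} \le (1+\gamma_{p,w})\|Q_\theta - Q_\lambda^*\|_{p,w}.
\]
We then raise everything to the $p$-th power and divide by $p$. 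This is legitimate because all three quantities are nonnegative (note $1-\gamma_{p,w}>0$) and $t\mapsto t^p/p$ is monotone on $[0,\infty)$.

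For the upper bound, insert the fixed point into the residual:
\[
F_\lambda Q_\theta - Q_\theta = (F_\lambda Q_\theta - F_\lambda Q_\lambda^*) + (Q_\lambda^* - Q_\theta).
\]
Apply the triangle inequality for $\|\cdot\|_{p,w}$, which is a genuine norm for $p\ge 1$ with $w_i>0$. Then bound the first term by $\gamma_{p,w}\|Q_\theta - Q_\lambda^*\|_{p,w}$ using Proposition~\ref{prop:lipschitz}. This gives the factor $1+\gamma_{p,w}$.

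For the lower bound, write
\[
Q_\theta - Q_\lambda^* = (Q_\theta - F_\lambda Q_\theta) + (F_\lambda Q_\theta - F_\lambda Q_\lambda^*).
\]
The triangle inequality and Proposition~\ref{prop:lipschitz} then give
\[
\|Q_\theta - Q_\lambda^*\|_{p,w} \le \|F_\lambda Q_\theta - Q_\theta\|_{p,w} + \gamma_{p,w}\|Q_\theta - Q_\lambda^*\|_{p,w}.
\]
Rearranging yields $(1-\gamma_{p,w})\|Q_\theta - Q_\lambda^*\|_{p,w}\le \|F_\lambda Q_\theta - Q_\theta\|_{p,w}$, which is the left inequality.

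There is no real obstacle; the argument is routine. The only points needing care are two checks. First, the rearrangement in the lower bound uses $\gamma_{p,w}<1$, which is exactly the hypothesis $p\ge\bar p$. Second, raising both sides to the $p$-th power preserves the inequalities because both sides are nonnegative.
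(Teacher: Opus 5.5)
Your proposal is correct and matches the paper's proof. The upper bound uses the triangle inequality plus Proposition~\ref{prop:lipschitz}; the lower bound is the reverse triangle inequality written out (the paper writes $|1-\gamma_{p,w}|$ and then drops the absolute value using $\gamma_{p,w}<1$); and both sides are then raised to the $p$-th power. One small correction to your final remark: the rearrangement holds for any $\gamma_{p,w}$, and what actually requires $\gamma_{p,w}<1$ is $1-\gamma_{p,w}\ge 0$, which makes the $p$-th power step legitimate, as you already noted in your first paragraph.
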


\begin{proof}
Using the triangle inequality, we have
\[
\|F_\lambda Q_\theta - Q_\theta\|_{p,w}
\le
\|F_\lambda Q_\theta - F_\lambda Q_\lambda^*\|_{p,w}
+
\|Q_\lambda^* - Q_\theta\|_{p,w}.
\]
Since $F_\lambda Q_\lambda^* = Q_\lambda^*$ and by Proposition~\ref{prop:lipschitz}, it follows that
\[
\|F_\lambda Q_\theta - F_\lambda Q_\lambda^*\|_{p,w}
\le
\gamma_{p,w}\|Q_\theta - Q_\lambda^*\|_{p,w}.
\]
Then, we obtain
\[
\|F_\lambda Q_\theta - Q_\theta\|_{p,w}
\le
(1+\gamma_{p,w})
\|Q_\theta - Q_\lambda^*\|_{p,w}.
\]
Raising both sides to the $p$-th power and dividing by $p$ gives
\[
f_p(\theta)
\le
\frac{(1+\gamma_{p,w})^p}{p}
\|Q_\theta - Q_\lambda^*\|_{p,w}^p.
\]
For the lower bound, the reverse triangle inequality yields
\[
\|F_\lambda Q_\theta - Q_\theta\|_{p,w}
\ge
\left|
\|Q_\theta - Q_\lambda^*\|_{p,w}
-
\|F_\lambda Q_\theta - F_\lambda Q_\lambda^*\|_{p,w}
\right|.
\]
Using Proposition~\ref{prop:lipschitz} and rearranging the terms yields
\[
\|F_\lambda Q_\theta - Q_\theta\|_{p,w}
\ge
|1-\gamma_{p,w}| \cdot
\|Q_\theta - Q_\lambda^*\|_{p,w}.
\]
Since $\gamma_{p,w}< 1$, we have
\[
\|F_\lambda Q_\theta - Q_\theta\|_{p,w}
\ge
(1-\gamma_{p,w})
\|Q_\theta - Q_\lambda^*\|_{p,w}.
\]
Taking the $p$-th power and dividing by $p$ yields
\[
f_p(\theta)
\ge
\frac{(1-\gamma_{p,w})^p}{p}
\|Q_\theta - Q_\lambda^*\|_{p,w}^p.
\]
Combining the two bounds proves the result.
\end{proof}

Lemma~\ref{lemma:sandwich} shows that the Bellman residual
is tightly controlled by the $L_{p,w}$-norm space distance to the fixed point.
This immediately yields a quasi-optimality guarantee for the global minimizer.

\begin{theorem}
\label{thm:quasi_opt}Let $p \ge \bar p$ such that $\gamma_{p,w} < 1 $.
Let us define
$
\theta_p^* := \arg\min_{\theta \in \mathbb{R}^d} f_p(\theta).
$
Then, the following equation holds:
\[
\|Q_{\theta_p^*} - Q_\lambda^*\|_{p,w}
\;\le\;
\frac{1+\gamma_{p,w}}{1-\gamma_{p,w}}
\min_{\theta \in \mathbb{R}^d}
\|Q_\theta - Q_\lambda^*\|_{p,w}.
\]
\end{theorem}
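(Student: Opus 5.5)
The plan is to combine the two sides of Lemma~\ref{lemma:sandwich} with the optimality of $\theta_p^*$. The estimate is a standard quasi-optimality (C\'ea-type) argument: the residual objective $f_p$ is sandwiched between two multiples of the same quantity $\|Q_\theta - Q_\lambda^*\|_{p,w}^p$, so minimizing $f_p$ costs at most the ratio of those constants.

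First, fix an arbitrary $\theta \in \mathbb{R}^d$. Applying the lower bound of Lemma~\ref{lemma:sandwich} at $\theta_p^*$ gives
\[
\frac{(1-\gamma_{p,w})^p}{p}\|Q_{\theta_p^*} - Q_\lambda^*\|_{p,w}^p \le f_p(\theta_p^*).
\]
Since $\theta_p^*$ minimizes $f_p$, we have $f_p(\theta_p^*) \le f_p(\theta)$. Applying the upper bound of the lemma at $\theta$ then yields
\[
f_p(\theta) \le \frac{(1+\gamma_{p,w})^p}{p}\|Q_\theta - Q_\lambda^*\|_{p,w}^p.
\]
Chaining these three inequalities, cancelling the common factor $1/p$, and taking $p$-th roots gives
\[
(1-\gamma_{p,w})\|Q_{\theta_p^*} - Q_\lambda^*\|_{p,w} \le (1+\gamma_{p,w})\|Q_\theta - Q_\lambda^*\|_{p,w}.
\]
Taking $p$-th roots preserves the inequality because both sides are nonnegative and $t\mapsto t^{1/p}$ is monotone. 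Since $\gamma_{p,w}<1$, dividing by $1-\gamma_{p,w}>0$ is legitimate.

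Finally, because $\theta$ was arbitrary, we may take the infimum over $\theta$ on the right-hand side. That infimum is attained: $\mathcal{Q}_\Phi$ is a finite-dimensional subspace, $\theta\mapsto\|\Phi\theta - Q_\lambda^*\|_{p,w}$ is continuous, and it is coercive because $\Phi$ has full column rank. So the infimum can be written as the $\min$ in the statement.

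There is no real analytical obstacle. The only point needing care is the existence of $\theta_p^*$, i.e., that the $\arg\min$ of $f_p$ is attained. This can be taken as part of the hypothesis, since the theorem defines $\theta_p^*$. Alternatively, it follows from continuity of $f_p$, together with coercivity, which is inherited from the lower bound of Lemma~\ref{lemma:sandwich} and the full rank of $\Phi$. If the minimizer is not unique, the argument applies verbatim to any minimizer.
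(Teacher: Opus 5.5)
Your proposal is correct and follows essentially the same route as the paper, which also combines the two sides of Lemma~\ref{lemma:sandwich} with the minimality of $\theta_p^*$ (compared directly against the best approximator $\bar\theta$ rather than an arbitrary $\theta$ followed by an infimum) and divides by $1-\gamma_{p,w}$. Your remarks on why both minima are attained, via continuity and coercivity from the full column rank of $\Phi$, add rigor that the paper leaves implicit.
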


\begin{proof}
From the inequality in Lemma~\ref{lemma:sandwich}, we have
\[
\begin{aligned}
(1-\gamma_{p,w})
\|Q_\theta - Q_\lambda^*\|_{p,w}
&\le
\|F_\lambda Q_\theta - Q_\theta\|_{p,w} \le
(1+\gamma_{p,w})
\|Q_\theta - Q_\lambda^*\|_{p,w}.
\end{aligned}
\]

Let
\[
\theta_p^* := \arg\min_{\theta \in \mathbb{R}^d} f_p(\theta), 
\qquad
\bar \theta := \arg\min_{\theta \in \mathbb{R}^d}
\|Q_\theta - Q_\lambda^*\|_{p,w}.
\]

Since $\theta_p^*$ minimizes $f_p(\theta)$, we have
\[
\|F_\lambda Q_{\theta_p^*}-Q_{\theta_p^*}\|_{p,w}
\le
\|F_\lambda Q_{\bar \theta}-Q_{\bar \theta}\|_{p,w}.
\]

Applying the bounds from Lemma~\ref{lemma:sandwich} gives
\[
(1-\gamma_{p,w})
\|Q_{\theta_p^*}-Q_\lambda^*\|_{p,w}
\le
(1+\gamma_{p,w})
\|Q_{\bar \theta}-Q_\lambda^*\|_{p,w}.
\]

Therefore
\[
\|Q_{\theta_p^*}-Q_\lambda^*\|_{p,w}
\le
\frac{1+\gamma_{p,w}}{1-\gamma_{p,w}}
\|Q_{\bar \theta}-Q_\lambda^*\|_{p,w}.
\]

Since $\bar \theta$ minimizes the approximation error, we have
\[
\|Q_{\theta_p^*}-Q_\lambda^*\|_{p,w}
\le
\frac{1+\gamma_{p,w}}{1-\gamma_{p,w}}
\min_{\theta \in \mathbb{R}^d}
\|Q_\theta - Q_\lambda^*\|_{p,w}.
\]
\end{proof}

Theorem~\ref{thm:quasi_opt} establishes that the minimizer of the
soft Bellman residual objective is a quasi-best approximation
to the true soft $Q$-function in the $L_{p,w}$-norm.
The multiplicative constant
$
C(p) := \frac{1+\gamma_{p,w}}{1-\gamma_{p,w}}
$
quantifies the approximation gap induced by contraction. This constant $C(p)$ monotonically decreases as $p$ increases, as further discussed in Section~\ref{limiting section}. This behavior justifies that larger values of $p$ yield tighter upper error bounds.

\begin{theorem}
\label{thm:best_compare}
Let $p \ge \bar p$ such that $\gamma_{p,w} < 1$.
Let
\[
\bar{\theta}
:=
\arg\min_{\theta \in \mathbb{R}^d}
\|Q_\theta - Q_\lambda^*\|_{p,w},
\qquad
\theta_p^*
:=
\arg\min_{\theta \in \mathbb{R}^d}
f_p(\theta).
\]
Then,
\[
\|Q_{\bar{\theta}} - Q_{\theta_p^*}\|_{p,w}
\;\le\;
\left(
1 +
\frac{1+\gamma_{p,w}}{1-\gamma_{p,w}}
\right)
\min_{\theta \in \mathbb{R}^d}
\|Q_\theta - Q_\lambda^*\|_{p,w}.
\]
This bound shows that the deviation between the PSBRM solution and the best approximation is bounded by $(1 + C(p))$ times the optimal approximation error, and becomes tighter as $p$ increases.
\end{theorem}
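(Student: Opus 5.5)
The plan is a single triangle inequality in the $L_{p,w}$-norm, inserting the fixed point $Q_\lambda^*$ as an intermediate point, followed by the two error bounds already available:
\[
\|Q_{\bar\theta} - Q_{\theta_p^*}\|_{p,w}
\le
\|Q_{\bar\theta} - Q_\lambda^*\|_{p,w}
+
\|Q_\lambda^* - Q_{\theta_p^*}\|_{p,w}.
\]

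The first term is, by the definition of $\bar\theta$, exactly $\min_{\theta \in \mathbb{R}^d}\|Q_\theta - Q_\lambda^*\|_{p,w}$. The second term is controlled directly by Theorem~\ref{thm:quasi_opt}: it is at most $C(p)\min_{\theta}\|Q_\theta - Q_\lambda^*\|_{p,w}$, with $C(p) = \frac{1+\gamma_{p,w}}{1-\gamma_{p,w}}$. This uses the standing hypothesis $\gamma_{p,w}<1$. Summing the two bounds gives the factor $1 + C(p)$ claimed in the statement.

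For the closing remark about tightening as $p$ grows, I would note that $\gamma_{p,w} = \gamma\,(n w_{\max}/w_{\min})^{1/p}$ is nonincreasing in $p$, because $n w_{\max}/w_{\min} \ge 1$. Since $x \mapsto \frac{1+x}{1-x}$ is increasing on $[0,1)$, it follows that $C(p)$, and hence $1 + C(p)$, is nonincreasing in $p$.

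There is essentially no obstacle. The only point needing care is that $\bar\theta$ and $\theta_p^*$ exist. For $\bar\theta$, existence holds because the problem is a norm minimization over a finite-dimensional subspace with $\Phi$ full rank. For $\theta_p^*$, existence is assumed in the statement, which defines it as an argmin. Uniqueness is not needed, since the bound holds for any choice of minimizers.
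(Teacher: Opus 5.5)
Your proof is correct and matches the paper's argument exactly: a triangle inequality through $Q_\lambda^*$, the definition of $\bar\theta$ for the first term, and Theorem~\ref{thm:quasi_opt} for the second. Your side remarks on the monotonicity of $C(p)$ and on the existence of minimizers are also sound; in fact, the existence of $\theta_p^*$ need not be assumed, since it follows from the coercive lower bound in Lemma~\ref{lemma:sandwich}, the continuity of $f_p$, and the full column rank of $\Phi$.
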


\begin{proof}
By the triangle inequality, we have
\[
\|Q_{\bar{\theta}} - Q_{\theta_p^*}\|_{p,w}
\le
\|Q_{\bar{\theta}} - Q_\lambda^*\|_{p,w}
+
\|Q_{\theta_p^*} - Q_\lambda^*\|_{p,w}.
\]
From Theorem~\ref{thm:quasi_opt}, we have
\[
\|Q_{\theta_p^*} - Q_\lambda^*\|_{p,w}
\le
\frac{1+\gamma_{p,w}}{1-\gamma_{p,w}}
\min_{\theta \in \mathbb{R}^d}
\|Q_\theta - Q_\lambda^*\|_{p,w}.
\]
Substituting this bound yields
\[
\begin{aligned}
\|Q_{\bar{\theta}} - Q_{\theta_p^*}\|_{p,w}
&\le
\|Q_{\bar{\theta}} - Q_\lambda^*\|_{p,w} +
\frac{1+\gamma_{p,w}}{1-\gamma_{p,w}}
\min_{\theta \in \mathbb{R}^d}
\|Q_\theta - Q_\lambda^*\|_{p,w}.
\end{aligned}
\]
Since $\bar{\theta}$ minimizes the approximation error, it follows that
\[
\|Q_{\bar{\theta}} - Q_\lambda^*\|_{p,w}
=
\min_{\theta \in \mathbb{R}^d}
\|Q_\theta - Q_\lambda^*\|_{p,w}.
\]
Consequently, we obtain
\[
\|Q_{\bar{\theta}} - Q_{\theta_p^*}\|_{p,w}
\le
\left(
1+
\frac{1+\gamma_{p,w}}{1-\gamma_{p,w}}
\right)
\min_{\theta \in \mathbb{R}^d}
\|Q_\theta - Q_\lambda^*\|_{p,w}.
\]
\end{proof}

Taken together, Theorems~\ref{thm:quasi_opt}
and~\ref{thm:best_compare}
show that the PSBRM solution lies within a
contraction-controlled constant factor of the
best approximation error achievable within
the function class.
In particular, PSBRM does not introduce
an uncontrolled additional approximation bias;
any deviation from the best approximation
is explicitly bounded by the contraction-dependent
constant $C(p)$.

\section{Limiting Analysis of PSBRM}
\label{limiting section}

\subsection{Monotonic tightening of the quasi-optimality coefficient}

In this section, we characterize how increasing the exponent $p$ affects both
(i) the quasi-optimality bound from Section~\ref{sec:sbrm} and
(ii) the limiting geometry of the residual objective.
Throughout, recall the effective contraction rate
$\gamma_{p,w}$ and the corresponding coefficient $C(p)$.

\begin{theorem}
\label{thm:mono_Cp}
Let us define 
\[
C(p):=\frac{1+\gamma_{p,w}}{1-\gamma_{p,w}}.
\]
Then, there exists a finite threshold $\bar p$ such that for all $p_1>p_2\ge \bar p$,
\[
C(p_1)<C(p_2).
\]
Moreover,
\[
\lim_{p\to\infty} C(p)
=
\frac{1+\gamma}{1-\gamma}.
\]
\end{theorem}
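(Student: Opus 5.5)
The plan is to reduce the statement to two facts: $\gamma_{p,w}$ decreases in $p$, and $C$ is an increasing function of $\gamma_{p,w}$. Write $K:=n\,w_{\max}/w_{\min}$, so that by Definition~\ref{def:effective_gamma}
\[
\gamma_{p,w}=\gamma K^{1/p}=\gamma\exp\!\Big(\tfrac{\ln K}{p}\Big).
\]
Since $w_{\max}\ge w_{\min}>0$ and $n\ge 2$, we have $K\ge n>1$, hence $\ln K>0$. The map $p\mapsto \ln K/p$ is therefore strictly decreasing on $(1,\infty)$, and because $\exp$ is strictly increasing, $p\mapsto\gamma_{p,w}$ is strictly decreasing whenever $\gamma>0$. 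The degenerate case $\gamma=0$ gives $C\equiv 1$; there the strict inequality fails and has to be excluded or noted separately.

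For the threshold, take $\bar p$ as in Corollary~\ref{cor:contraction}; concretely, any $\bar p>\ln K/\ln(1/\gamma)$ works. Then $\gamma_{p,w}\in[0,1)$ for all $p\ge\bar p$. On $[0,1)$ the function $g(x):=\frac{1+x}{1-x}=-1+\frac{2}{1-x}$ is well defined and strictly increasing, since $g'(x)=\frac{2}{(1-x)^2}>0$. For $p_1>p_2\ge\bar p$ we get $\gamma_{p_1,w}<\gamma_{p_2,w}<1$, and therefore
\[
C(p_1)=g(\gamma_{p_1,w})<g(\gamma_{p_2,w})=C(p_2).
\]

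For the limit, $K^{1/p}=\exp(\ln K/p)\to e^0=1$ as $p\to\infty$, so $\gamma_{p,w}\to\gamma$. Since $\gamma<1$, $g$ is continuous at $\gamma$, and it follows that $C(p)=g(\gamma_{p,w})\to\frac{1+\gamma}{1-\gamma}$.

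The argument has no real obstacle. The only points needing care are choosing $\bar p$ strictly above $\ln K/\ln(1/\gamma)$, so that the denominator $1-\gamma_{p,w}$ stays positive throughout the range, and recording the assumption $\gamma>0$ together with $K>1$ that strict monotonicity requires.
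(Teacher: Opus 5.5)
Your proposal is correct and follows essentially the same route as the paper's proof. Both show that $\gamma_{p,w}=\gamma K^{1/p}$ strictly decreases in $p$, compose it with the strictly increasing map $x\mapsto\frac{1+x}{1-x}$ on $[0,1)$ inside the contraction regime of Corollary~\ref{cor:contraction}, and get the limit from $K^{1/p}\to 1$. Your remark that strict monotonicity needs $\gamma>0$ (and $n\ge 2$) is a valid refinement that the paper's proof leaves implicit.
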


\begin{proof}
Recall that
\[
\gamma_{p,w}
=
\gamma \, n^{\frac{1}{p}}
\left(\frac{w_{\max}}{w_{\min}}\right)^{\frac{1}{p}}.
\]
Let $A := n \frac{w_{\max}}{w_{\min}}$, which satisfies $A > 1$ for $n > 1$.
Then \(A^{1/p}\) is strictly decreasing in \(p\), and hence \(\gamma_{p,w}\) is also strictly decreasing in \(p\). On the contraction regime \(p\ge \bar p\), we have \(\gamma_{p,w}\in[0,1)\). Define
\[
\phi(x):=\frac{1+x}{1-x}, \qquad x\in[0,1).
\]
Since
\[
\phi'(x)=\frac{2}{(1-x)^2}>0,
\]
the function \(\phi\) is strictly increasing on \([0,1)\). Therefore,
\[
C(p)=\frac{1+\gamma_{p,w}}{1-\gamma_{p,w}}=\phi(\gamma_{p,w})
\]
is strictly decreasing in \(p\). This proves the first claim. Finally, since \(A^{1/p}\to 1\) as \(p\to\infty\), we have \(\gamma_{p,w}\to \gamma\). Hence,
\[
\lim_{p\to\infty} C(p)=\frac{1+\gamma}{1-\gamma}.
\]
\end{proof}

Theorem~\ref{thm:mono_Cp} shows that, within the contraction regime,
larger $p$ systematically reduces the multiplicative gap in the quasi-optimality guarantee
(Theorem~\ref{thm:quasi_opt}).

\begin{figure}[h]
    \centering
    \includegraphics[width=0.5\textwidth]{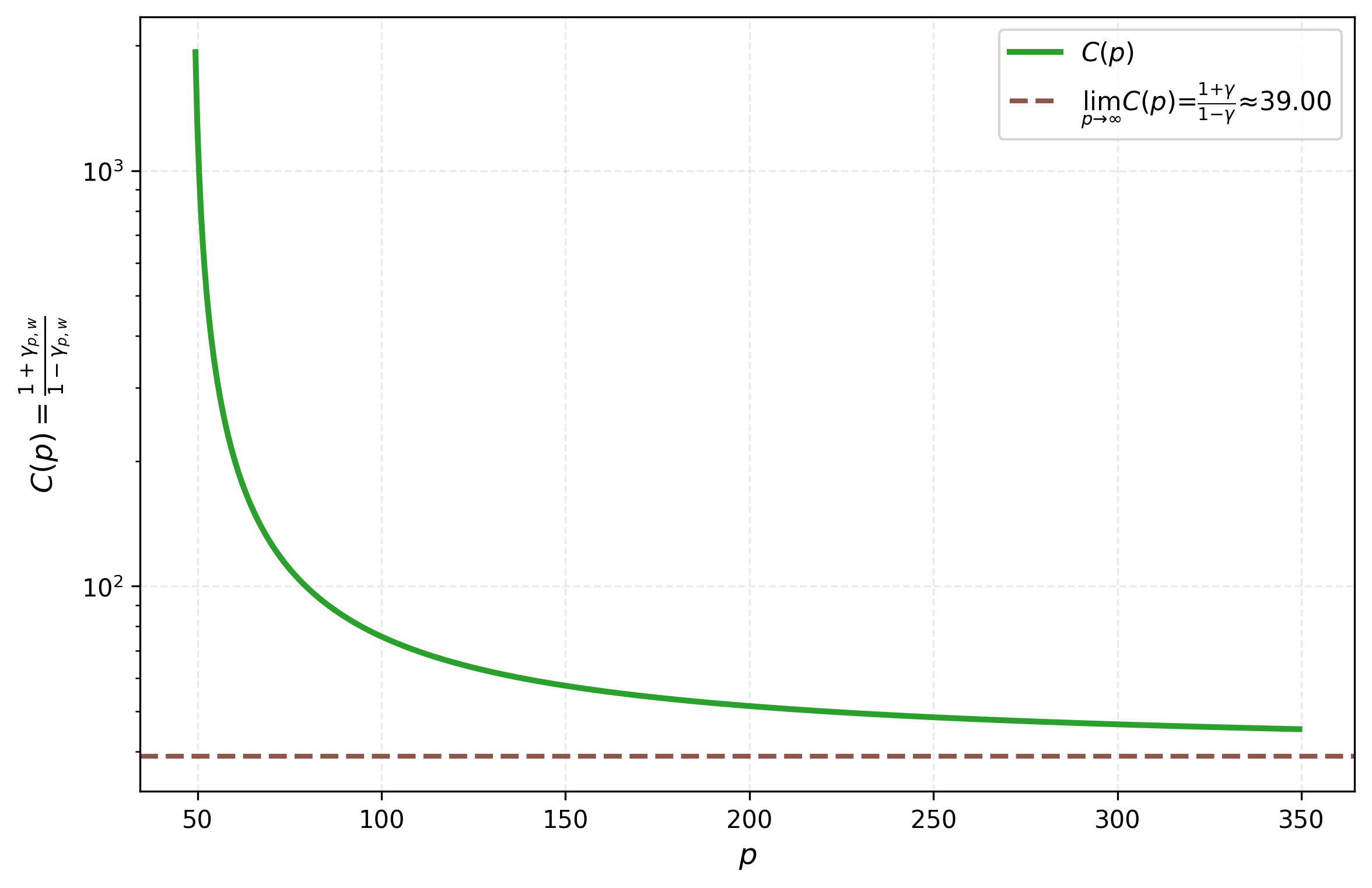}
    \caption{Monotonic decay of $C(p)$ and its convergence as $p \to \infty$.}
    \label{fig:curve}
\end{figure}

Figure~\ref{fig:curve} illustrates the asymptotic behavior of $C(p)$ for the specific case $\gamma = 0.95$.
The constant $C(p)$ decreases monotonically and converges to its limiting value as $p \to \infty$. 
This behavior demonstrates that the multiplicative gap in the quasi-optimality guarantee becomes progressively tighter as $p$ increases.


\subsection{Asymptotic alignment with the $L_{\infty}$ geometry}

While Theorem~\ref{thm:mono_Cp} quantifies improvement at the level of the bound,
we next formalize how the residual objective itself approaches the classical $L_\infty$ contraction geometry. Let $\delta_\theta := F_\lambda Q_\theta - Q_\theta \in \mathbb{R}^n$ denote the Bellman residual.
Let us define the normalized residual functional
\[
J_p(\theta)
:=
\bigl(p f_p(\theta)\bigr)^{1/p}
=
\|\delta_\theta\|_{p,w},
\qquad
J_\infty(\theta)
:=
\|\delta_\theta\|_{\infty}.
\]

\begin{proposition}
\label{prop:uniform_limit}
For any $\theta\in\mathbb{R}^d$, we have
\[
\lim_{p\to\infty} J_p(\theta) = J_\infty(\theta).
\]
Moreover, the convergence is uniform on any compact subset of $\mathbb{R}^d$.
\end{proposition}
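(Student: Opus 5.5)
The plan is to sandwich $J_p(\theta)$ between two explicit multiples of $J_\infty(\theta)$, using item 1 of Lemma~\ref{lemma:appendix_lemma} applied to $x=\delta_\theta$:
\[
w_{\min}^{1/p}\,\|\delta_\theta\|_\infty \;\le\; \|\delta_\theta\|_{p,w} \;\le\; n^{1/p}w_{\max}^{1/p}\,\|\delta_\theta\|_\infty .
\]
The upper bound can in fact be sharpened to $\|\delta_\theta\|_{p,w}\le \|\delta_\theta\|_\infty$, since $\sum_i w_i=1$ gives $\sum_i w_i|x_i|^p\le \|x\|_\infty^p$. Either form suffices. Because $w_{\min}>0$ is fixed, $w_{\min}^{1/p}\to 1$ and $(n w_{\max})^{1/p}\to 1$ as $p\to\infty$. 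The squeeze theorem then gives $J_p(\theta)\to J_\infty(\theta)$ pointwise, and this holds for every $\theta$, including those with $\delta_\theta=0$.

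For uniformity, subtracting $J_\infty(\theta)$ from the sandwich gives
\[
|J_p(\theta)-J_\infty(\theta)| \le \max\bigl\{1-w_{\min}^{1/p},\; (nw_{\max})^{1/p}-1\bigr\}\, J_\infty(\theta).
\]
(With the sharpened upper bound the maximum is simply $1-w_{\min}^{1/p}$.) The prefactor depends only on $p$, $n$ and $w$, not on $\theta$, and it tends to $0$. So it remains to show that $\sup_{\theta\in K} J_\infty(\theta)<\infty$ for a compact $K\subset\mathbb{R}^d$.

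This boundedness is the one step needing an argument beyond the norm inequalities, though it is routine. The map $\theta\mapsto \delta_\theta = F_\lambda(\Phi\theta)-\Phi\theta$ is continuous: $\Phi\theta$ is linear in $\theta$, and $F_\lambda$ is continuous, either because it is built from log-sum-exp compositions or because it is $\gamma$-Lipschitz in $\|\cdot\|_\infty$. Hence $J_\infty$ is a continuous real function and attains a finite maximum $M_K$ on $K$. Then
\[
\sup_{\theta\in K}|J_p(\theta)-J_\infty(\theta)|\le \epsilon_p M_K \to 0,
\]
where $\epsilon_p$ denotes the prefactor above, which proves uniform convergence on compact sets.
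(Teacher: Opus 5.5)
Your proposal is correct and follows essentially the same route as the paper. Both use the sandwich from item 1 of Lemma~\ref{lemma:appendix_lemma}, a squeeze for pointwise convergence, and continuity of $\theta\mapsto\|\delta_\theta\|_\infty$ on a compact $K$, which bounds the error by a $\theta$-independent prefactor times $M_K$. Your observation that $\sum_i w_i=1$ sharpens the upper bound to $\|\delta_\theta\|_{p,w}\le\|\delta_\theta\|_\infty$ is valid; it is a minor simplification the paper does not use.
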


\begin{proof} By Lemma~\ref{lemma:appendix_lemma}, for any \(x\in\mathbb{R}^n\),
\[
w_{\min}^{1/p}\|x\|_\infty
\le
\|x\|_{p,w}
\le
n^{1/p}w_{\max}^{1/p}\|x\|_\infty.
\]
Applying this with \(x=\delta_\theta\), we obtain
\[
w_{\min}^{1/p}J_\infty(\theta)
\le
J_p(\theta)
\le
n^{1/p}w_{\max}^{1/p}J_\infty(\theta).
\]
Since
\[
w_{\min}^{1/p}\to 1,
\qquad
n^{1/p}w_{\max}^{1/p}\to 1
\qquad\text{as }p\to\infty,
\]
we have
\[
\lim_{p\to\infty}J_p(\theta)=J_\infty(\theta)
\]
for every \(\theta\in\mathbb{R}^d\).

Now let \(K\subset\mathbb{R}^d\) be compact. Since \(\delta_\theta\) depends continuously on \(\theta\), the map
\[
\theta\mapsto J_\infty(\theta)=\|\delta_\theta\|_\infty
\]
is continuous on \(K\). Hence there exists a constant \(M_K<\infty\) such that
\[
J_\infty(\theta)\le M_K
\qquad
\text{for all }\theta\in K.
\]
From the above norm inequalities, we have
\[
J_p(\theta)-J_\infty(\theta)
\le
\bigl(n^{1/p}w_{\max}^{1/p}-1\bigr)J_\infty(\theta),
\]
and
\[
J_\infty(\theta)-J_p(\theta)
\le
\bigl(1-w_{\min}^{1/p}\bigr)J_\infty(\theta).
\]
Therefore, for all \(\theta\in K\),
\[
|J_p(\theta)-J_\infty(\theta)|
\le
\max\!\Bigl\{
n^{1/p}w_{\max}^{1/p}-1,\,
1-w_{\min}^{1/p}
\Bigr\}
\,M_K.
\]
The factor on the right converges to \(0\) as \(p\to\infty\). Hence
\[
\sup_{\theta\in K}|J_p(\theta)-J_\infty(\theta)|\to 0,
\]
which proves uniform convergence on every compact subset of \(\mathbb{R}^d\).
\end{proof}

Proposition~\ref{prop:uniform_limit} states that increasing $p$ does not merely tighten a bound but
continuously deforms the optimization geometry from an average-type residual measure
toward the worst-case residual measure in $L_{\infty}$.


\section{Algorithms for PSBRM}

We now describe a practical optimization procedure for computing the estimator. The analysis in the previous sections shows that minimizing the Bellman residual in $L_{p,w}$-norm yields a quasi-optimal approximation to the soft Bellman fixed point when the contraction condition is satisfied under sufficiently large $p$. This motivates solving the PSBRM problem using gradient-based optimization. In general, the objective $f_p(\theta)$ involves absolute values and is therefore nonsmooth. To simplify the derivation of the gradient and the resulting optimization algorithm, we restrict the exponent $p$ to be an even integer with $p \ge 2$. Under this assumption, the objective becomes a polynomial function of the Bellman residual and is differentiable everywhere, allowing us to apply standard gradient-based methods~\cite{boyd2004convex}. Based on this observation, we derive the gradient of the PSBRM objective and construct a gradient descent algorithm to compute the parameter vector $\theta$.

\begin{lemma}
\label{lem:gradient}
Let us define 
\[
f_p(\theta)
:=
\frac{1}{p}\|F_\lambda(\Phi\theta)-\Phi\theta\|_{p,w}^p,
\qquad
\delta_\theta := F_\lambda(\Phi\theta)-\Phi\theta.
\]
Suppose that \(p\ge 2\) is an even integer. Then \(f_p\) is differentiable and
\[
\nabla_\theta f_p(\theta)
=
\sum_{i=1}^n
w_i
\bigl(e_i^\top \delta_\theta\bigr)^{p-1}
\nabla_\theta\!\bigl(e_i^\top \delta_\theta\bigr).
\]
Moreover, if \(Q_\theta=\Phi\theta\) and \(\pi_\theta\) denotes the Boltzmann (soft greedy) policy induced by \(Q_\theta\), namely
\[
\pi_\theta(j\mid i)
:=
\frac{\exp(Q_\theta(i,j)/\lambda)}
{\sum_{u\in\mathcal A}\exp(Q_\theta(i,u)/\lambda)},
\]
then the Jacobian of the residual satisfies
\[
\nabla_\theta \delta_\theta
=
(\gamma P\Pi^{\pi_\theta}-I_n)\Phi,
\]
where \(\Pi^{\pi_\theta}\) is the linear policy operator defined by
\[
(\Pi^{\pi_\theta} Q)(i)
=
\sum_{a\in\mathcal A}\pi_\theta(a\mid i)\,Q(i,a).
\]
Consequently,
\[
\nabla_\theta\!\bigl(e_i^\top \delta_\theta\bigr)
=
\Phi^\top(\gamma P\Pi^{\pi_\theta}-I_n)^\top e_i.
\]
\end{lemma}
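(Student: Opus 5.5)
The plan is a chain-rule computation in two layers: first differentiate the outer polynomial in the residual, then compute the Jacobian of the residual itself via the log-sum-exp structure of $F_\lambda$.

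\textbf{Outer layer.} Since $p$ is an even integer, $|x|^p = x^p$ for every real $x$. Hence
\[
f_p(\theta)=\frac{1}{p}\sum_{i=1}^n w_i\,(e_i^\top\delta_\theta)^p .
\]
This is a polynomial in the coordinates of $\delta_\theta$, so no absolute value or nonsmooth point appears. Once $\theta\mapsto\delta_\theta$ is shown to be $C^1$, differentiability of $f_p$ follows. The chain rule applied termwise then gives
\[
\nabla_\theta f_p(\theta)=\sum_{i=1}^n w_i\,(e_i^\top\delta_\theta)^{p-1}\,\nabla_\theta(e_i^\top\delta_\theta),
\]
because the factor $p$ from $\frac{d}{dx}x^p=p\,x^{p-1}$ cancels the prefactor $1/p$. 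This yields the first formula.

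\textbf{Inner layer.} Write $\delta_\theta = R + \gamma P\,V_\lambda(\Phi\theta) - \Phi\theta$. Here $P\in\mathbb R^{n\times|\mathcal S|}$ is the state-action-to-next-state transition matrix, and
\[
V_\lambda(Q)(s')=\lambda\ln\sum_u \exp\bigl(Q(s',u)/\lambda\bigr).
\]
This is the matrix form of the definition of $F_\lambda$. Each component of $V_\lambda$ is a smooth log-sum-exp function. Its partial derivative with respect to $Q(s',a)$ is the softmax weight
\[
\frac{\exp(Q(s',a)/\lambda)}{\sum_u\exp(Q(s',u)/\lambda)}=\pi_\theta(a\mid s'),
\]
with the factor $\lambda$ cancelling the $1/\lambda$ from the chain rule. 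All partials with respect to entries at other states vanish. Thus the Jacobian of $V_\lambda$ at $Q_\theta$ is exactly the matrix of $\Pi^{\pi_\theta}$, whose row $s'$ places the weights $\pi_\theta(\cdot\mid s')$ on the entries $(s',\cdot)$. Composing with the linear maps $Q\mapsto\gamma PQ$ and $\theta\mapsto\Phi\theta$ gives
\[
\nabla_\theta\delta_\theta=\gamma P\Pi^{\pi_\theta}\Phi-\Phi=(\gamma P\Pi^{\pi_\theta}-I_n)\Phi .
\]
This Jacobian is continuous in $\theta$, so $\delta_\theta$ is $C^1$, which closes the differentiability claim from the outer layer.

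\textbf{Final identity.} Note that $e_i^\top\delta_\theta$ is a scalar function whose derivative (row vector) is $e_i^\top(\gamma P\Pi^{\pi_\theta}-I_n)\Phi$. Transposing to a gradient column vector gives
\[
\nabla_\theta(e_i^\top\delta_\theta)=\Phi^\top(\gamma P\Pi^{\pi_\theta}-I_n)^\top e_i .
\]

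The only step requiring care is the log-sum-exp derivative and the dimension bookkeeping: $P$ maps $\mathbb R^{|\mathcal S|}\to\mathbb R^{n}$ and $\Pi^{\pi_\theta}$ maps $\mathbb R^{n}\to\mathbb R^{|\mathcal S|}$. One should also be explicit that "$\nabla_\theta\delta_\theta$" denotes the $n\times d$ Jacobian, while the gradient of a scalar is the transposed column vector. Otherwise the argument is routine.
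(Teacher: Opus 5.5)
Your proposal is correct and takes the paper's route: for even $p$ you replace $|x|^p$ by $x^p$ and apply the chain rule, then identify the Jacobian of the log-sum-exp value as the softmax policy matrix $\Pi^{\pi_\theta}$ and compose it with $\gamma P$ and $\Phi$. Your remarks on the regularity of $\theta\mapsto\delta_\theta$ and the dimensions of $P$ and $\Pi^{\pi_\theta}$ make explicit what the paper leaves implicit.
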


\begin{proof}
By definition,
\[
f_p(\theta)
=
\frac{1}{p}\sum_{i=1}^n w_i |e_i^\top\delta_\theta|^p,
\qquad
\delta_\theta := F_\lambda(\Phi\theta)-\Phi\theta.
\]
Since \(p\) is even, \(|x|^p = x^p\), and hence \(f_p\) is differentiable. 
Applying the chain rule yields
\[
\begin{aligned}
\nabla_\theta f_p(\theta)
&=
\frac{1}{p}\sum_{i=1}^n
w_i \nabla_\theta (e_i^\top\delta_\theta)^p \\
&=
\sum_{i=1}^n
w_i
(e_i^\top\delta_\theta)^{p-1}
\nabla_\theta(e_i^\top\delta_\theta).
\end{aligned}
\]
We now compute \(\nabla_\theta(e_i^\top\delta_\theta)\).
Recall the soft Bellman operator,
\[
\begin{aligned}
(F_\lambda Q)(s,a)
&= R(s,a) \\
&\quad + \gamma \sum_{s'} P(s'\mid s,a)\,
\lambda \ln \sum_{u\in\mathcal A}
\exp\!\Bigl(\tfrac{Q(s',u)}{\lambda}\Bigr),
\end{aligned}
\]
and defining
\[
V_Q(s)
:=
\lambda \ln \sum_{u\in\mathcal A}
\exp\!\bigl(Q(s,u)/\lambda\bigr),
\]
we obtain
\[
\nabla_Q V_Q = \Pi^{\pi_Q},
\qquad
\nabla_Q F_\lambda(Q) = \gamma P \Pi^{\pi_Q}.
\]
Evaluating at \(Q=Q_\theta=\Phi\theta\) gives
\[
\nabla_Q F_\lambda(Q_\theta)
=
\gamma P \Pi^{\pi_\theta}.
\]
By the chain rule, we have
\[
\nabla_\theta F_\lambda(\Phi\theta)
=
(\gamma P\Pi^{\pi_\theta})\Phi,
\]
and therefore
\[
\nabla_\theta\delta_\theta
=
(\gamma P\Pi^{\pi_\theta}-I_n)\Phi.
\]
Hence, it follows that
\[
\nabla_\theta(e_i^\top\delta_\theta)
=
\Phi^\top(\gamma P\Pi^{\pi_\theta}-I_n)^\top e_i.
\]
Substituting this into the expression for \(\nabla_\theta f_p(\theta)\)
completes the proof.
\end{proof}

\begin{algorithm}[h]
\caption{Normalized Gradient Descent for PSBRM}
\label{alg:psbrm_ngd}
\begin{algorithmic}[1]
\State Initialize $\theta_0 \in \mathbb{R}^d$ and choose an even integer $p \ge 2$

\For{\(k=0,1,2,\dots\)}
    \State Compute the residual
    \[
    \delta_{\theta_k}=F_\lambda(\Phi\theta_k)-\Phi\theta_k
    \]
    \State Normalize the residual
    \[
    \tilde{\delta}_{\theta_k}
    =
    \frac{\delta_{\theta_k}}{\|\delta_{\theta_k}\|_\infty}
    \]
    \State Compute the normalized gradient
    \[
    \tilde{\nabla}_\theta f_p(\theta_k)
    =
    \sum_{i=1}^n
    w_i
    (e_i^\top \tilde{\delta}_{\theta_k})^{p-1}
    \nabla_\theta(e_i^\top \delta_{\theta_k})
    \]
    \State Update
    \[
    \theta_{k+1}
    =
    \theta_k-\alpha_k \tilde{\nabla}_\theta f_p(\theta_k)
    \]
\EndFor
\end{algorithmic}
\end{algorithm}

Once the gradient is established, it can be directly used within a deterministic gradient descent algorithm to update the parameter vector $\theta$. However, when \(p\) is large, the gradient term \((e_i^\top \delta_\theta)^{p-1}\) can become highly sensitive to the scale of the Bellman residual. In particular, if some residual components have magnitude greater than one, the resulting gradient may become excessively large, which makes the optimization numerically unstable and the choice of learning rate $\alpha_k$ highly sensitive. To improve stability, we therefore normalize the residual by its maximum absolute component before forming the update direction.

Let us define
\[
\tilde{\delta}_\theta
:=
\frac{\delta_\theta}{\|\delta_\theta\|_\infty}.
\]
We then use the normalized update direction
\[
\tilde{\nabla}_\theta f_p(\theta)
=
\sum_{i=1}^n
w_i
(e_i^\top \tilde{\delta}_\theta)^{p-1}
\nabla_\theta(e_i^\top \delta_\theta).
\]
Since
\[
(e_i^\top \tilde{\delta}_\theta)^{p-1}
=
\frac{(e_i^\top \delta_\theta)^{p-1}}{\|\delta_\theta\|_\infty^{p-1}},
\]
this normalization rescales the original gradient by the positive scalar \(\|\delta_\theta\|_\infty^{-(p-1)}\). Hence,
\[
\tilde{\nabla}_\theta f_p(\theta)
=
c(\theta)\,\nabla_\theta f_p(\theta),
\qquad c(\theta)>0.
\]
Therefore, \(\tilde{\nabla}_\theta f_p(\theta)=0\) if and only if \(\nabla_\theta f_p(\theta)=0\), 
and the normalization does not alter the set of stationary points of the objective. Based on this observation, we implement a normalized version of the gradient descent algorithm as Algorithm~\ref{alg:psbrm_ngd} 
in which the Bellman residual is scaled by its maximum magnitude at each iteration.

\section{EXPERIMENTS}

\subsection{Experiment setup}

We consider a simple MDP with six states and two actions, with a discount factor of $\gamma = 0.95$.
The transition probabilities for the two actions, denoted by $P_0$ and $P_1$, are specified as follows, where 
$P(s' \mid s, a=0)=P_0$ and $P(s' \mid s, a=1)=P_1$. 
We also use the following reward matrix $R$. 
In addition, we use a randomly generated full-rank feature matrix $\Phi \in \mathbb{R}^{12 \times 6}$.

\[
P_0=
\begin{bmatrix}
0 & 1 & 0 & 0 & 0 & 0\\
0 & 0 & 1 & 0 & 0 & 0\\
1 & 0 & 0 & 0 & 0 & 0\\
0 & 0 & 0 & 0 & 1 & 0\\
0 & 0 & 0 & 0 & 0 & 1\\
0 & 0 & 0 & 1 & 0 & 0
\end{bmatrix}, \;
P_1=
\begin{bmatrix}
0 & 0 & 0 & 1 & 0 & 0\\
0 & 0 & 0 & 0 & 1 & 0\\
0 & 0 & 0 & 0 & 0 & 1\\
1 & 0 & 0 & 0 & 0 & 0\\
0 & 1 & 0 & 0 & 0 & 0\\
0 & 0 & 1 & 0 & 0 & 0
\end{bmatrix}, \;
R=
\begin{bmatrix}
0.8 & 1.2\\
0.8 & -0.4\\
1.0 & 0.2\\
0.2 & 0.6\\
-0.6 & 0.4\\
-0.8 & 0.3
\end{bmatrix}.
\]

\subsection{Algorithm comparison}
\label{exp:algo}
In this section, to examine the performance of PSBRM and compare it with other baseline methods, we consider three algorithms. 
The first is projected value iteration under the $L_{p}$-norm, denoted by $L_p$-PVI, which was introduced earlier in Section~\ref{subsec:avi_limitation} and is included in this section to illustrate algorithmic instability. 
The other two are our proposed method, PSBRM, and its basic variant based on soft Bellman residual minimization under the $L_2$-norm, which we denote by $L_2$-SBRM. 
The implementation of PSBRM follows Algorithm~\ref{alg:psbrm_ngd} with a sufficiently large exponent $p=80$, whereas $L_2$-SBRM is obtained from the same update rule by setting $p=2$.

As shown in Figure~\ref{fig:algo_comparison}, the $L_p$-PVI algorithm exhibits clear instability. In contrast, the Bellman residual minimization family shows stable error trajectories. 
Moreover, the overall error of $L_2$-SBRM is larger than that of our proposed method, PSBRM. 
This result is well aligned with our claims established in Theorem~\ref{thm:quasi_opt}, Theorem~\ref{thm:best_compare}, and Theorem~\ref{thm:mono_Cp}, which suggest that larger values of $p$ are better aligned with the geometry of the $L_\infty$-norm and yield tighter upper bounds on the error.

We further compare PSBRM with the $L_2$-PVI algorithm. As shown in Figure~\ref{fig:algo_avi}, $L_2$-PVI exhibits clear divergence, whereas PSBRM maintains stable convergence. This behavior is consistent with the well-known result that PVI can diverge due to norm mismatch~\cite{gordon1995stable, de2000existence}. In contrast, PSBRM satisfies the contractive property in our analysis, which ensures stable convergence.

\begin{figure}[h]
    \centering
    \begin{subfigure}[t]{0.32\textwidth}
        \centering
        \includegraphics[width=\textwidth]{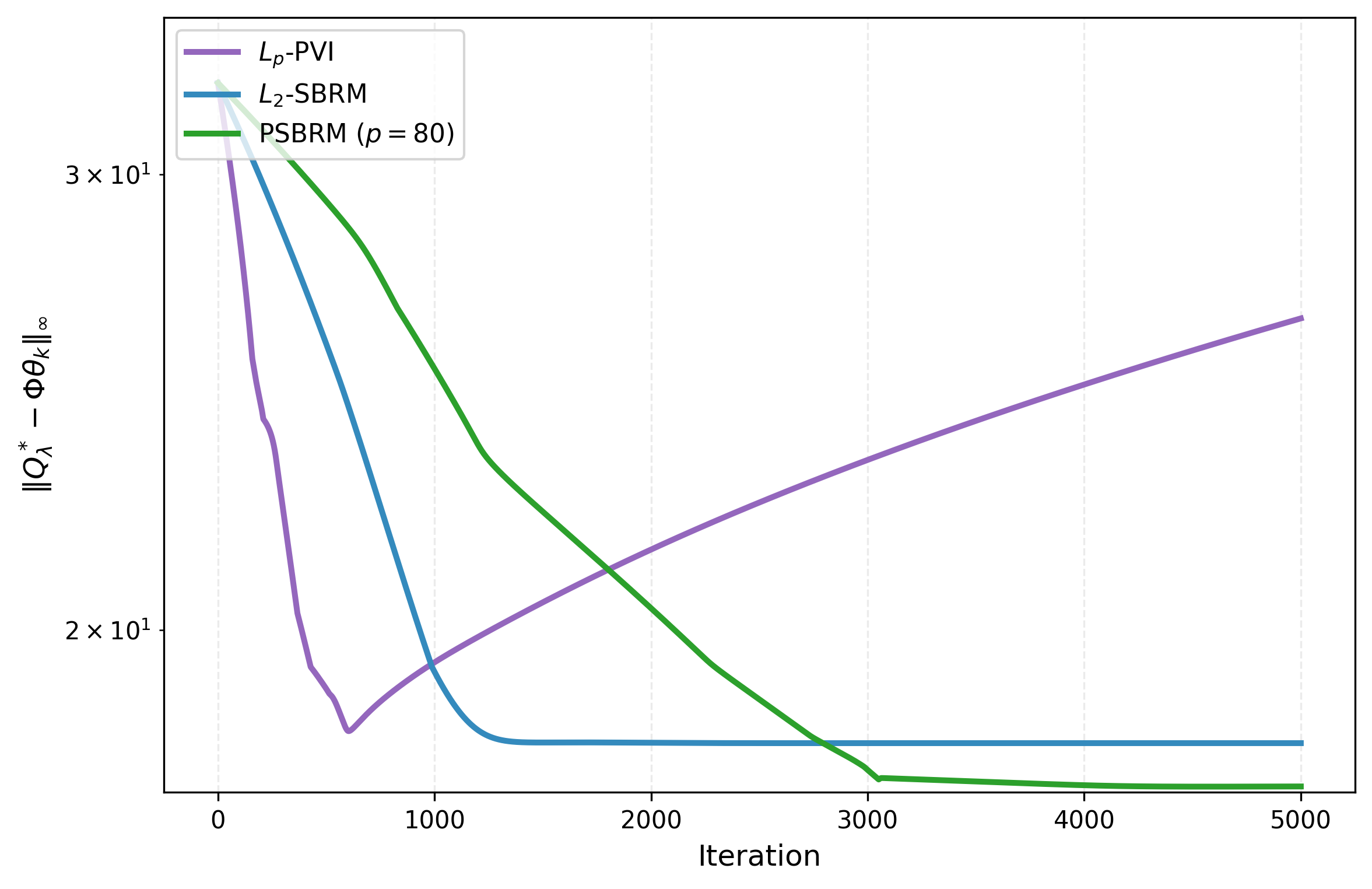}
        \caption{Comparison of methods. We compare $L_p$-PVI, $L_2$-SBRM, and PSBRM $(p=80)$ on the simple MDP.}
        \label{fig:algo_comparison}
    \end{subfigure}
    \hfill
    \begin{subfigure}[t]{0.32\textwidth}
        \centering
        \includegraphics[width=\textwidth]{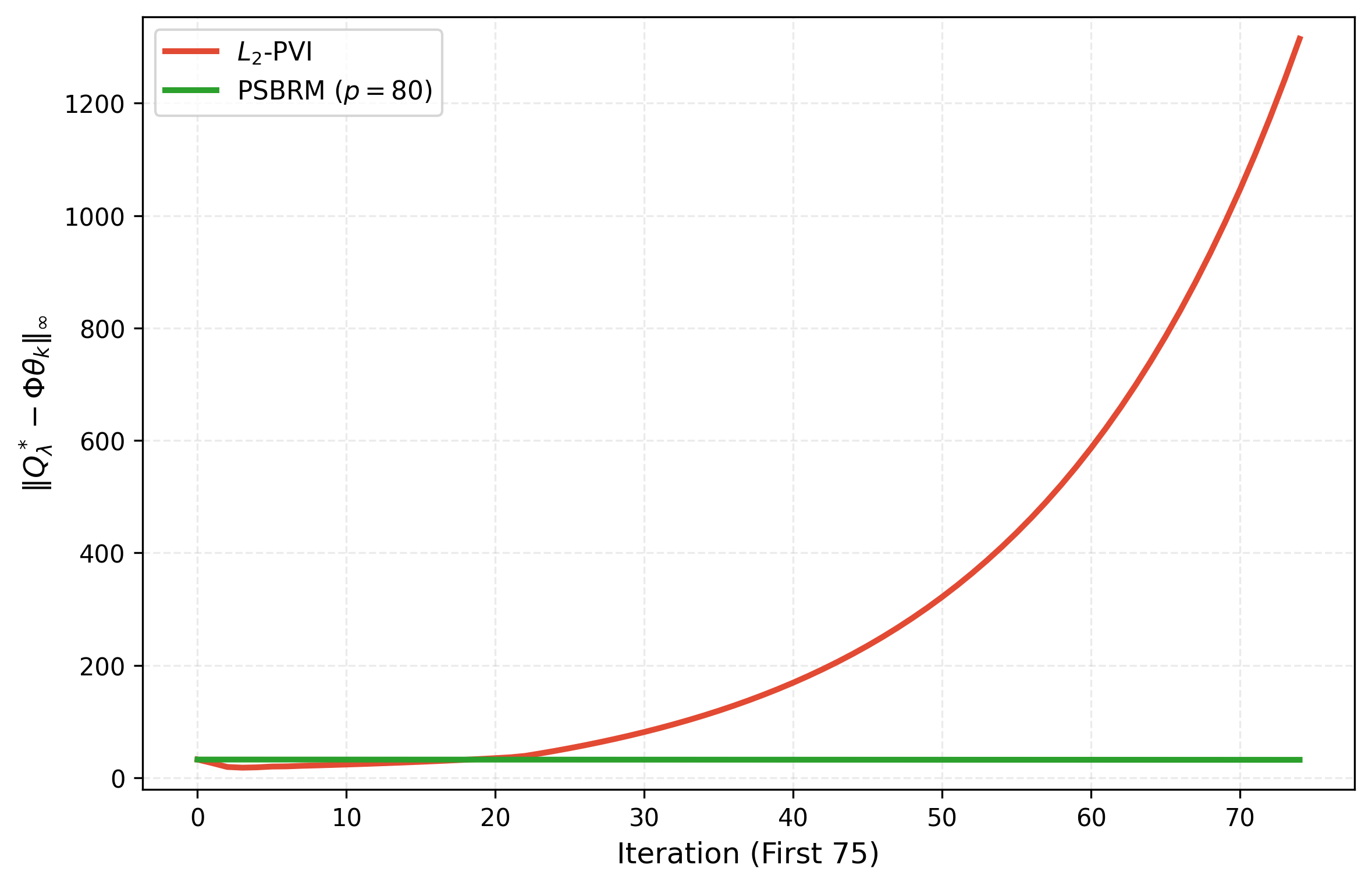}
        \caption{Comparison Between $L_2$-PVI, and PSBRM $(p=80)$.}
        \label{fig:algo_avi}
    \end{subfigure}
    \hfill
    \begin{subfigure}[t]{0.32\textwidth}
        \centering
        \includegraphics[width=\textwidth]{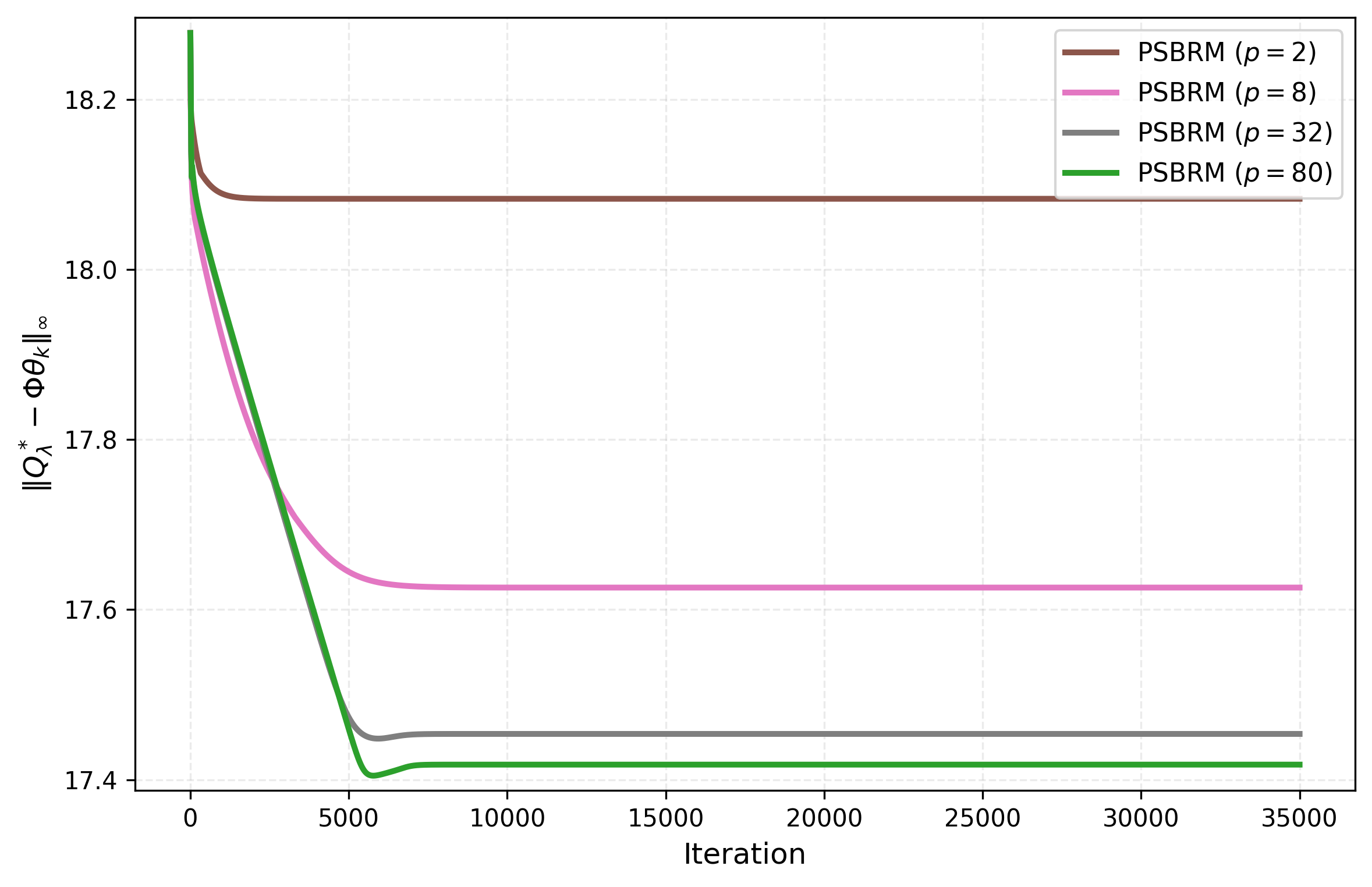}
        \caption{Ablation study of PSBRM with different values of $p \in \{2, 8, 32, 80\}$ on the simple MDP.}
        \label{fig:ablation}
    \end{subfigure}
    \caption{Experimental results on the simple MDP.}
    \label{fig:combined_results}
\end{figure}

\subsection{Effect of $p$ on PSBRM}

In this section, we conduct the same experimental setting as in Section~\ref{exp:algo}, but restrict attention to PSBRM with different values of $p$. 
The values of $p$ are chosen as $[2, 8, 32, 80]$ so that we can examine the effect of increasing $p$.

As shown in Figure~\ref{fig:ablation}, we observe that as $p$ increases, the error at the stationary point decreases monotonically. 
This result is also well aligned with our claims established in Theorem~\ref{thm:quasi_opt}, Theorem~\ref{thm:best_compare}, and Theorem~\ref{thm:mono_Cp}. 
Although the theory only proves the monotonic decrease of the upper bound on the error, the experiment shows that the actual error also decreases monotonically.





\bibliographystyle{IEEEtran}
\bibliography{reference}



\section*{ACKNOWLEDGMENT}

The work was supported by the Institute of Information Communications Technology Planning Evaluation (IITP) funded by the Korea government under Grant 2022-0-00469, and the BK21 FOUR
from the Ministry of Education (Republic of Korea).

\end{document}